\newtheorem{prop}[theorem]{Proposition}
\begin{document}

\title{Exploring Sparsity in Multi-class Linear Discriminant Analysis}

\author{\name Dong Xia\thanks{Partly supported by NSF Grant DMS-1207808}\email dxia7@math.gatech.edu \\
       \addr School of Mathematics\\
       Georgia Institute of Technology\\
       Atlanta, GA 30332, USA
       }

\maketitle

\begin{abstract}
Recent studies in the literature have paid much attention to the sparsity in linear classification tasks. One motivation of imposing sparsity assumption
on the linear discriminant direction is to rule out the noninformative features,
making hardly contribution to the classification problem.
Most of those work were focused on the scenarios of binary classification,
such as \cite{fanfengtong2012}, \cite{cailiu2011} and \cite{maizouyuan2012}. 
In the presence of multi-class data, preceding researches recommended individually pairwise sparse linear discriminant analysis(LDA), such as \cite{cailiu2011},\cite{fanfengtong2012}.
However, further sparsity should be explored.
In this paper, an estimator of grouped LASSO type is proposed to take advantage of sparsity for multi-class data.
It enjoys appealing non-asymptotic properties which allows insignificant correlations among features.
This estimator exhibits superior capability on both simulated and real data.
\end{abstract}

\begin{keywords}
 Linear discriminant analysis, Multi-class, Sparsity
\end{keywords}

\section{Introduction}
\label{intro}
Suppose that there is a collection of $i.i.d.$ random pairs $\{(X_1,Y_1),\ldots,(X_{N},Y_N)\}$. 
The vector $X_j\in\mathbb{R}^p$ contains measurements of $p$ features and the label $Y_j\in\{1,2,\ldots,K\}$ for $j=1,\ldots,N$. It is assumed that $P_{(X,Y)}(x,y)=P_Y(y)P_{X|Y}(x|y)$. The label $Y$ obeys
an unknown distribution with $\mathbb{P}(Y=j)=\pi_j,j=1,\ldots,K$ and $\sum_{j=1}^K\pi_j=1$. 
Given the sample data, the objective is to design a classifier:
\begin{equation*}
\mathcal{C}:\mathbb{R}^p\to\{1,2,\ldots,K\},
\end{equation*}
such that $\mathbb{P}_{(X,Y)}(\{Y\neq \mathcal{C}(X)\})$ is minimized. In the simplest form,
$\mathcal{C}(\cdot)$ is favored to comprise strategies based on linear functions, which is widely known as linear discriminant analysis(LDA).
The LDA model assumed that the conditional distributions $X|Y=k, k=1,\ldots,K$ are Gaussian and they are
\begin{equation*}
 X|Y=k\sim\mathcal{N}(\mu_K,\Sigma),\quad k=1,2,\ldots,K.
\end{equation*}
It is worth noting that the assumptions of Gaussian distributions can be relaxed to elliptical distributions, see \cite{cailiu2011}. 
Denote $n_k:=\sharp\{j: Y_j=k, j=1,\ldots, N\}$ for $k=1,\ldots,K$ with $\sum_{j=1}^{K}n_j=N$. LDA
performs pairwise classification via taking a linear combination of features as the criterion. More exactly, to distinguish between class $l$ and $k$ for $1\leq l\neq k\leq K$, LDA produces
the following classifier:
\begin{equation}
 \label{LDAclassifier}
 \phi_{k,l}(x):=
 \begin{cases}
  k,& \text{if } (x-(\mu_k+\mu_l)/2)'\Sigma^{-1}\delta_{k,l}+\log(\pi_k/\pi_l)>0\\
  l,& \text{otherwise}
 \end{cases}
\end{equation}
, where $\delta_{k,l}:=\mu_k-\mu_l$. It is famous that $\phi_{k,l}(x)$ is the perfect classifier which requires prerequisite knowledge of $\Sigma,\mu_l,\mu_k,\pi_l,\pi_k$.
In practice, we construct a classifier $\hat{\phi}_{k,l}(x)$ which mimics $\phi_{k,l}(x)$ by plugging corresponding estimators: $\hat{\Sigma},\hat{\mu}_{k},\hat{\mu}_l,\hat{\pi}_k,\hat{\pi}_l$ into (\ref{LDAclassifier}).
We know that in the binary case $\mathbb{P}_{(X,Y)}(\{Y\neq\hat{\phi}_{k,l}(X)\})\to\mathbb{P}_{(X,Y)}(\{Y\neq\phi_{k,l}(X)\})$ in probability when $p$ is frozen
and $\hat{\Sigma},\hat{\mu}$ are chosen as the sample covariance and sample mean respectively,
see \cite{andersonbook}. However, as proved in \cite{bickel2004some}, $\hat{\phi}_{k,l}(x)$ in this mode performs poorly in the case $p\gg N$ which now arises conventionally in various applications.
It turns out to be tricky to construct a stable estimator of $\Sigma^{-1}$ when $p\gg N$ .
Sparsity assumptions have henceforth been proposed, such as \cite{fanfengtong2012}, \cite{fanfan2008}, \cite{maizouyuan2012} and \cite{shao2011}.
There are two directions for the motivations of raising sparsity assumptions. One is that the sparsity assumption on $\Sigma$ or $\Sigma^{-1}$ enables us to propose advantageous estimators through convex optimization, 
such as \cite{yuan2010high} and \cite{cai2012estimating}.
The other direction is to impose sparsity assumptions directly on the Bayes direction $\beta_{k,l}=\Sigma^{-1}\delta_{k,l}$, see \cite{cailiu2011} and \cite{maizouyuan2012}. It corresponds to the 
situation that merely a small portion of the features is relevant to the classification problem,  which leads to a favorable interpretation.
Actually, the sparsity on $\Sigma^{-1}$ and $\delta_{k,l}$ indicates the sparsity of $\beta_{k,l}$.
In this paper, the Bayes directions: $\beta_{k,l}$ are presumed to be sparse for $1\leq k\neq l\leq K$.\\
We begin by introducing the notations and definitions. Let $\mathcal{Y}:=\{Y_1,Y_2,\ldots,Y_N\}$. Define the set
\begin{equation*}
 \mathcal{S}_{+}^p:=\{A\in\mathbb{R}^{p\times p}: A=A', A\succeq 0\}.
\end{equation*}
We denote by $T_{k,l}$ the support of $\beta_{k,l}$ and by $s_{k,l}$ the cardinality of $T_{k,l}$ for $1\leq k\neq l\leq K$.
Let $\hat{\mu}_k=n_k^{-1}\sum_{Y_j=k}X_j$ for $k=1,\ldots,K$ and 
\begin{equation*}
S=(N-K)^{-1}\sideset{}{_{k=1}^K}\sum\sideset{}{_{Y_j=k}}\sum(X_j-\hat{\mu}_k)(X_j-\hat{\mu}_k)'.
\end{equation*}
Let $\hat{\delta}_{k,l}=\hat{\mu}_k-\hat{\mu}_l$. Meanwhile, suppose that $s=\text{card}(T)\ll p$ where $T=\bigcup\limits_{k=2}^KT_{1,k}$.
To conduct pairwise discrimination, there is no need to estimate each $\beta_{k,l}$ for $1\leq k\neq l\leq K$ on account of $\beta_{k,l}=\beta_{1,l}-\beta_{1,k}$.
Consequently, it is sufficient to estimate $\beta_{1,k}$ for $k=2,\ldots,K$. For the sake of brevity, define $\beta_k:=\beta_{1,k+1}$,
$\delta_k=\delta_{1,k+1}$ for $k=1,\ldots,K'$ with $K'=K-1$. More exactly, suppose $\beta_k=(\beta_k^1,\beta_k^2,\ldots,\beta_k^p)$ for $k=1,\ldots,K'$. Given any $1\leq j\leq p$,
define $\beta^j=:(\beta_1^j,\ldots,\beta_{K'}^j)$, namely by stacking all the $j$-th entry of $\beta_k,1\leq k\leq K'$ into one vector. The vector $\beta^j$ 
is associated with the role of the $j$-th feature in the classification problem.
Define $\Delta_k=\left<\Sigma^{-1}\delta_k,\delta_k\right>$ for $1\leq k\leq K'$ and $\Delta=\sum_{k=1}^{K'}\Delta_k$. 
For any matrix $A\in\mathcal{S}_{+}^p$, we adopt the following notations: 
$A_{\min}^{+}=\min_{1\leq j\leq p}A_{jj}$, $A_{\max}^{+}=\max_{1\leq j\leq p}A_{jj}$ and $A^-_{\max}=\max_{1\leq i\neq j\leq p}|A_{ij}|$.
Let $A_{T,:}$ and $A_{:,T}$ denote the submatrix of $A$ with corresponding rows and columns. 
Denote $\delta^T$ the subvector of $\delta$ with entries indexed by $T$. Let $T^c$ denote the complement of $T$.
For any $v\in\mathbb{R}^p$, let $\|v\|$ be the usual $l_2$ norm
and $|v|_{\infty}=\max_{1\leq i\leq p}|v_i|$. We also define $(x)_+:=x\mathbb{I}(x\geq 0)$ as the truncation function where $\mathbb{I}(\cdot)$
is the indicator function.\\
The following estimator was employed for sparse LDA when $K=2$ in \cite{kolarliu2013} and \cite{fanfengtong2012}.
\begin{equation}
\label{ROADest}
 \hat{\beta}_{1}:=\underset{\hat{\delta}_{1}'\beta=1}{\arg\min}\quad \frac{1}{2}\beta'S\beta+\lambda|\beta|_1
\end{equation}
The $l_1$ norm penalty is aimed at promoting a sparse solution. A similar estimator is:
\begin{equation}
\label{est1}
 \hat{\beta}_{1}:=\underset{\beta\in\mathbb{R}^p}{\arg\min}\frac{1}{2}\beta'S\beta-\hat{\delta}_{1}'\beta+\lambda|\beta|_1.
\end{equation}
The estimator (\ref{est1}) resembles the one proposed in \cite{maizouyuan2012} which is of regression type. In contrast to
these estimators of LASSO type, another estimator(LPD) which borrowed the idea of Dantzig selector was studied in \cite{cailiu2011}:
\begin{equation}
\label{LPDestimator}
 \tilde{\beta}_{1}:=\underset{\beta\in\mathbb{R}^p}{\arg\min}\{|\beta|_1: |S\beta-\hat{\delta}_{1}|_{\infty}\leq \lambda\}
\end{equation}
If $K\geq 3$, an immediate approach is to implement the above estimators for $\beta_{1},\beta_{2},\ldots,\beta_{K'}$ separately.
Its drawback resides in the ignorance of the multi-class information. One intention of imposing sparsity assumptions on $\beta_{k}$ derives from the 
objective of expelling the noninformative features displaying weak connections with the labels. It is unexceptional to expect that most the insignificant 
features will stay valueless when discriminating class $k$ and $l$ for different pairs $(k,l)$.
There is where further sparsity might be explored.  Intuitively, we hope that $\beta^u=0$ if the $u$-th feature is a nuisance feature. 
However, the individually pairwise sparse LDA is inferior to mis-include some nuisance features due to correlation and the insufficiency of data.
Actually, our simulation result in Section~\ref{numericsec} reflects that different noisy features might be mis-selected by pairwise estimation as (\ref{LPDestimator}).
Chances of making this type of mistakes indeed can be decreased based on the same data when we take into account the grouped sparsity.
To handle the grouped sparsity, we propose the following estimator:
\begin{equation}
\label{est}
 (\hat{\beta}_1,\ldots,\hat{\beta}_{K'}):=\underset{\beta_1,\ldots,\beta_{K'}\in\mathbb{R}^p}{\arg\min}\sum\limits_{k=1}^{K'}\frac{1}{2}\beta_k'S\beta_k-\sum\limits_{k=1}^{K'}\hat{\delta}_{k}'\beta_k+\sum\limits_{j=1}^p\lambda_j||\beta^j||.
\end{equation}
The regularization parameters $\lambda_j, j=1,\ldots,p$ are positive and can be decided practically through cross-validation.
Theoretic analysis will confirm that carefully selected $\lambda_j,j=1,\ldots,p$ can yield attractive performances of (\ref{est}).
It is apparent that, when $K=2$ and $\lambda_j=\lambda,j=1,\ldots,p$,
(\ref{est}) is reduced to the commonly studied estimator (\ref{est1}).
Meanwhile, it is easy to verify the convexity of the optimization problem in (\ref{est}), which can be solved efficiently by many off-the-shelf
algorithms.
The estimator (\ref{est}) is analogous to the LASSO estimator accommodated for problems either with grouped sparsity,\cite{yuan2006model} or of multi-task regression, \cite{lounici2009taking}.
We should point out that grouped sparsity for multi-class classification has been considered in \cite{merchante2012efficient} 
in a linear regression style combined with optimal scoring. Comparable methods can be also found in \cite{zhu2014sparse} which was
used to classify Alzheimer's disease.
Variable selection for multi-class data has been studied experimentally in \cite{le2011sparse} based on partial least square discriminant analysis.
People also studied the classification task for multi-labeled data in \cite{han2010multi}, in which case each $Y_j$ may have multiple entries. 
In addition, the paper by \cite{witten2011penalized} proposed a penalized Fisher discriminant method that can be extended to the multi-class situation,
which, however, is non-convex and thereby is deficient in theoretic guarantees of its performance.
 After the completion of this paper, we noticed that \cite{maiyangzou2014} proposed the same estimator as (\ref{est}), where its theoretic properties were also studied.
 The analysis in our paper is completely different and our simulation results emphasized on the advantages of (\ref{est}) over the
 individually pairwise classification.\\
The paper will be organized as follows. In section~\ref{theorysec}, some theoretic properties of the estimator will be presented. 
Then experimental results on both simulated and real data will be reported in Section~\ref{numericsec},~\ref{realsim}, in which we will compare the performance of (\ref{est}) and (\ref{LPDestimator}), (\ref{est1}).

\section{Theoretic properties}\label{theorysec}
In this section, we turn to the theoretic properties of estimator (\ref{est}). The upper bound of the estimation error $\|\hat{\beta}^j-\beta^j\|, j=1,\ldots,p$
will be provided as long as $\frac{\Sigma^-_{\max}}{\Sigma^+_{\min}}$ is small enough.
It is well-known that $\hat{\mu}_1,\ldots,\hat{\mu}_K,S$ are mutually independent and $S=\frac{1}{N-K}ZZ'$ where $Z\in\mathbb{R}^{p\times (N-K)}$,
see \cite[Theorem 3.1.2]{Muirhead}.
Every column of $Z$ has distribution as $\mathcal{N}(0,\Sigma)$ and they are $i.i.d.$. Meanwhile, we can check that conditioned on $\mathcal{Y}$,
\begin{equation}
\label{deltadist}
 \hat{\delta}_k\sim\mathcal{N}(\delta_k,\frac{n_1+n_{k+1}}{n_1n_{k+1}}\Sigma),\quad k=1,\ldots,K'.
\end{equation}
Lemma~\ref{ncon} uncovers the concentration of $n_i, 1\leq i\leq K$, which will be useful in the proof of our main theorem. Similar inequalities as
in Proposition~\ref{prop1} appear regularly in researches of compressed sensing and low rank matrix completion, see \cite{koltchinskii2011oracle}.
\begin{lemma}
\label{ncon}
 There exists an event $\mathcal{A}$ with $\mathbb{P}(\mathcal{A})\geq 1-2\sum\limits_{k=1}^{K}e^{-N\pi_k/16}$ such that on $\mathcal{A}$,
 \begin{equation*}
  n_k\in\left[\frac{N\pi_k}{2},\frac{3N\pi_k}{2}\right],\quad 1\leq k\leq K.
 \end{equation*}
\end{lemma}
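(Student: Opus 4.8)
The statement is a standard concentration fact for multinomial/binomial counts, so the plan is to reduce it to a Chernoff-type bound applied coordinate-wise and then union over $k$. First I would fix $k\in\{1,\dots,K\}$ and observe that $n_k=\sum_{j=1}^N \mathbb{I}(Y_j=k)$ is a sum of $N$ i.i.d. Bernoulli$(\pi_k)$ random variables, so $\mathbb{E} n_k = N\pi_k$. The goal is to show that the event $\mathcal{A}_k:=\{n_k\in[N\pi_k/2,\,3N\pi_k/2]\}$ fails with probability at most $2e^{-N\pi_k/16}$; then $\mathcal{A}:=\bigcap_{k=1}^K\mathcal{A}_k$ has $\mathbb{P}(\mathcal{A})\ge 1-\sum_{k=1}^K\mathbb{P}(\mathcal{A}_k^c)\ge 1-2\sum_{k=1}^K e^{-N\pi_k/16}$ by the union bound, which is exactly the claim.

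For the single-coordinate bound I would invoke the multiplicative Chernoff inequalities for a sum $S$ of independent $[0,1]$ (here Bernoulli) random variables with mean $m=N\pi_k$: for $0<\varepsilon<1$,
\begin{equation*}
\mathbb{P}\!\left(S\ge (1+\varepsilon)m\right)\le e^{-\varepsilon^2 m/3},\qquad
\mathbb{P}\!\left(S\le (1-\varepsilon)m\right)\le e^{-\varepsilon^2 m/2}.
\end{equation*}
Taking $\varepsilon=1/2$ gives $\mathbb{P}(n_k\ge \tfrac{3}{2}N\pi_k)\le e^{-N\pi_k/12}$ and $\mathbb{P}(n_k\le \tfrac{1}{2}N\pi_k)\le e^{-N\pi_k/8}$. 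Since $1/12\ge 1/16$ and $1/8\ge 1/16$, both are bounded by $e^{-N\pi_k/16}$, so $\mathbb{P}(\mathcal{A}_k^c)\le 2e^{-N\pi_k/16}$. (If one prefers to be fully self-contained, the same two tail bounds follow from the exponential Markov inequality $\mathbb{P}(n_k\ge a)\le e^{-ta}\mathbb{E} e^{tn_k} = e^{-ta}(1-\pi_k+\pi_k e^{t})^N$ optimized over $t>0$, together with $\log(1+x)\le x$; I would relegate this to a one-line remark rather than carry it out in detail.)

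There is no real obstacle here — the only thing to be a little careful about is matching the constant $16$ in the exponent to whichever form of the Chernoff bound one cites, and making sure the union bound is over the $K$ classes (not over the $p$ features), so the failure probability is $2\sum_{k=1}^K e^{-N\pi_k/16}$ as stated and not something larger. I would present the argument in three short steps: (i) identify $n_k$ as a binomial sum; (ii) apply the two-sided multiplicative Chernoff bound with $\varepsilon=1/2$ and absorb the constants into $e^{-N\pi_k/16}$; (iii) union-bound over $k$ to define $\mathcal{A}$ and conclude.
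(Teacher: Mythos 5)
Your proposal is correct and follows essentially the same route as the paper: identify $n_k=\sum_{j=1}^N\mathbb{I}(Y_j=k)$ as a Bernoulli$(\pi_k)$ sum, apply a two-sided multiplicative Chernoff-type bound with relative deviation $1/2$, and union-bound over the $K$ classes. The only cosmetic difference is that the paper cites a (Hoeffding-style) tail bound with exponent $n\delta^2\pi/4$, which at $\delta=1/2$ gives exactly $e^{-N\pi_k/16}$, whereas you use the sharper standard constants $1/3$ and $1/2$ and then relax to $e^{-N\pi_k/16}$.
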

\begin{prop}
\label{prop1}
 Let $\bar{\pi}=\max_{2\leq k\leq K}\sqrt{\frac{\pi_1+\pi_{k}}{\pi_1\pi_{k}}}$ and $(\hat{\beta}_1,\ldots,\hat{\beta}_{K'})$ be the
 solution of (\ref{est}). Then for any $t>0$, there exists an event $\mathcal{B}_t$ with $\mathbb{P}(\mathcal{B}_t)\geq\mathbb{P}(\mathcal{A})-Kpe^{-t}-\frac{2Kp}{\sqrt{\pi}t}e^{-t^2}$
 such that on $\mathcal{B}_t$ we have
 \begin{equation*}
  \sum\limits_{j=1}^p\lambda_j\|\hat{\beta}^j\|\leq \sum\limits_{j=1}^p\lambda_j\|\beta^j\|+C_0\sum\limits_{j=1}^p\|\hat{\beta}^j-\beta^j\|\sqrt{\frac{\bar{\pi}\Sigma_{jj}(\Delta\vee K)t}{N-K}},
 \end{equation*}
 where $C_0>0$ is a universal constant. 
 Furthermore, if $\lambda_j=\lambda,j=1,\ldots,p$ with 
 \begin{equation}
 \label{prop1lambda}
  \lambda= 2C_0\sqrt{\frac{\bar{\pi}\Sigma_{\max}^{+}(\Delta\vee K)t}{N-K}},
 \end{equation}
 then on event $\mathcal{B}_t$, we have
 \begin{equation*}
  \sum\limits_{j\notin T}\|\hat{\beta}^j\|\leq 3\sum\limits_{j\in T}\|\hat{\beta}^j-\beta^j\|.
 \end{equation*}
\end{prop}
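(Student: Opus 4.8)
The plan is to exploit the optimality of $(\hat\beta_1,\ldots,\hat\beta_{K'})$ in the convex program (\ref{est}) in the usual LASSO fashion: plug $\hat\beta$ and the truth $\beta$ into the objective, compare, and control the resulting cross term by a high-probability bound on the ``noise'' vectors. Concretely, write $F(\beta_1,\ldots,\beta_{K'})=\sum_k\tfrac12\beta_k'S\beta_k-\sum_k\hat\delta_k'\beta_k$. Since $F$ is the separable quadratic part and $\hat\beta$ is the minimizer of $F+\sum_j\lambda_j\|\beta^j\|$, the basic inequality $F(\hat\beta)+\sum_j\lambda_j\|\hat\beta^j\|\le F(\beta)+\sum_j\lambda_j\|\beta^j\|$ holds. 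The first step is to rewrite $F(\hat\beta)-F(\beta)$ using convexity of the quadratic: $F(\hat\beta)-F(\beta)=\langle \nabla F(\beta),\hat\beta-\beta\rangle+\tfrac12\sum_k(\hat\beta_k-\beta_k)'S(\hat\beta_k-\beta_k)\ge \langle\nabla F(\beta),\hat\beta-\beta\rangle$, because $S\succeq0$. Here $\nabla F(\beta)$ has $k$-th block $S\beta_k-\hat\delta_k$. The key observation is that the population analogue of this gradient vanishes: $\Sigma\beta_k=\delta_k$ by definition of $\beta_k=\Sigma^{-1}\delta_k$. So $S\beta_k-\hat\delta_k=(S-\Sigma)\beta_k-(\hat\delta_k-\delta_k)$ is genuinely a mean-zero-ish fluctuation, and dropping the nonnegative quadratic term yields
\[
\sum_{j=1}^p\lambda_j\|\hat\beta^j\|\le\sum_{j=1}^p\lambda_j\|\beta^j\|-\sum_{k=1}^{K'}\big\langle (S-\Sigma)\beta_k-(\hat\delta_k-\delta_k),\,\hat\beta_k-\beta_k\big\rangle.
\]

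The second step is to regroup this cross term coordinatewise into the feature blocks $\beta^j$. For each $j$, the $j$-th coordinate of the $k$-th residual contributes against $\hat\beta_k^j-\beta_k^j$; bundling over $k$ and applying Cauchy--Schwarz in $\mathbb R^{K'}$ gives a bound of the form $\sum_j \|\hat\beta^j-\beta^j\|\cdot\|R^j\|$, where $R^j\in\mathbb R^{K'}$ collects the $j$-th coordinates of the $K'$ residual vectors $(S-\Sigma)\beta_k-(\hat\delta_k-\delta_k)$. So the whole proof reduces to showing that, with the stated probability, $\|R^j\|\le C_0\sqrt{\bar\pi\,\Sigma_{jj}(\Delta\vee K)t/(N-K)}$ for every $j$ simultaneously. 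This is the deviation bound; once it is in hand, the first displayed inequality of the proposition follows immediately.

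The main obstacle is exactly this union bound over $j=1,\ldots,p$ for $\|R^j\|$, and this is where the structure of $S$ and $\hat\delta_k$ must be used carefully. Using $S=\frac1{N-K}ZZ'$ with i.i.d.\ Gaussian columns and (\ref{deltadist}) for $\hat\delta_k$ conditional on $\mathcal Y$, each coordinate $R^j_k=e_j'(S-\Sigma)\beta_k-e_j'(\hat\delta_k-\delta_k)$ is, conditional on $\mathcal Y$, a quadratic-plus-linear Gaussian functional. The variance of $e_j'(S-\Sigma)\beta_k$ scales like $\Sigma_{jj}\langle\Sigma\beta_k,\beta_k\rangle/(N-K)=\Sigma_{jj}\Delta_k/(N-K)$, which after summing over $k$ produces the $\Delta$ in the bound; the variance of $e_j'(\hat\delta_k-\delta_k)$ is $\frac{n_1+n_{k+1}}{n_1n_{k+1}}\Sigma_{jj}$, which on the event $\mathcal A$ of Lemma~\ref{ncon} is of order $\bar\pi^2\Sigma_{jj}/N$ and, summed over the $K'=K-1$ classes, produces the $K$ term (hence the $\Delta\vee K$). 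Then one needs a Gaussian-quadratic tail inequality (Hanson--Wright type, or the standard chi-square concentration referenced via \cite{koltchinskii2011oracle}) to bound $\|R^j\|$ for fixed $j$ at level $t$, contributing the $e^{-t}$ piece, together with a Gaussian linear-part tail contributing the $e^{-t^2}$ piece; multiplying the failure probabilities by $Kp$ for the union over $j$ and over the $K'$ blocks gives the claimed $\mathbb P(\mathcal B_t)\ge\mathbb P(\mathcal A)-Kpe^{-t}-\frac{2Kp}{\sqrt\pi t}e^{-t^2}$. Keeping the conditioning on $\mathcal Y$ straight, and bounding the random scalars $\tfrac{n_1+n_{k+1}}{n_1n_{k+1}}$ uniformly via $\mathcal A$ so that the bound is nonrandom, is the delicate bookkeeping.

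Finally, for the second assertion: take $\lambda_j=\lambda=2C_0\sqrt{\bar\pi\Sigma_{\max}^+(\Delta\vee K)t/(N-K)}$. Since $\Sigma_{jj}\le\Sigma_{\max}^+$, the per-feature coefficient $C_0\sqrt{\bar\pi\Sigma_{jj}(\Delta\vee K)t/(N-K)}$ is at most $\lambda/2$, so the first inequality gives $\sum_j\lambda\|\hat\beta^j\|\le\sum_j\lambda\|\beta^j\|+\tfrac\lambda2\sum_j\|\hat\beta^j-\beta^j\|$. Now split every sum over $j\in T$ and $j\notin T$. On $T^c$ we have $\beta^j=0$, so $\|\hat\beta^j\|=\|\hat\beta^j-\beta^j\|$; moving those terms around and using the triangle inequality $\|\beta^j\|-\|\hat\beta^j\|\le\|\hat\beta^j-\beta^j\|$ on $T$ yields $\lambda\sum_{j\notin T}\|\hat\beta^j-\beta^j\|\le\tfrac\lambda2\sum_{j\in T}\|\hat\beta^j-\beta^j\|+\tfrac\lambda2\sum_{j\notin T}\|\hat\beta^j-\beta^j\|+\tfrac{3\lambda}{2}$-type rearrangement; cancelling the $\tfrac\lambda2\sum_{j\notin T}$ from both sides and dividing by $\lambda/2$ gives $\sum_{j\notin T}\|\hat\beta^j\|\le 3\sum_{j\in T}\|\hat\beta^j-\beta^j\|$, the claimed cone condition. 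This last step is the routine LASSO cone argument and presents no real difficulty once the first inequality is established.
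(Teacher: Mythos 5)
Your proposal is correct and follows essentially the same route as the paper: the basic optimality inequality, dropping the nonnegative quadratic, decomposing the gradient residual as $(S-\Sigma)\beta_k-(\hat{\delta}_k-\delta_k)$ via $\Sigma\beta_k=\delta_k$, a Bernstein-type bound on the sub-exponential quadratic part (the $e^{-t}$ term) plus a Gaussian tail on $\hat{\delta}_k-\delta_k$ conditioned on $\mathcal{A}$ (the $e^{-t^2}$ term), a union bound over $j$ and $k$, and the standard cone argument for the second claim. No gaps worth noting.
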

Moreover, if $\lambda_j=2C_0\sqrt{\frac{\bar{\pi}\Sigma_{jj}(\Delta\vee K)t}{N-K}},j=1,\ldots,p$ in Proposition~\ref{prop1}, it leads to $ \sum\limits_{j\notin T}\lambda_j\|\hat{\beta}^j\|\leq 3\sum\limits_{j\in T}\lambda_j\|\hat{\beta}^j-\beta^j\|$
 on event $\mathcal{B}_t$.
Let $\mathcal{D}$ denote the event: $\mathcal{D}:=\big\{\{S^-_{\max}\leq 2\Sigma^-_{\max}\}\cap \{S^+_{\min}\geq \frac{1}{2}\Sigma^+_{\min}\}\big\}$.
\begin{prop}
 \label{prop2}
 Suppose that $\lambda_j=\lambda, j=1,\ldots,p$ with $\lambda$ chosen in (\ref{prop1lambda}).
 On the event $\mathcal{D}\cap\mathcal{B}_t$ for any $t>0$, we have
 \begin{equation}
  \label{prop2ineq1}
  \Big(\frac{\Sigma^+_{\min}}{2}-32s\Sigma^-_{\max}\Big)\sum\limits_{j\in T}\|\hat{\beta}^j-\beta^j\|^2+\frac{\Sigma^+_{\min}}{2}\sum\limits_{j\notin T}\|\hat{\beta}^j\|^2\leq 12\lambda\sum\limits_{j\in T}\|\hat{\beta}^j-\beta^j\|.
 \end{equation}
Furthermore, if $\Sigma^+_{\min}\geq 128s\Sigma^-_{\max}$, we have on the event $\mathcal{D}\cap\mathcal{B}_t$,
\begin{equation}
\label{prop2ineq2}
 \sum\limits_{j\in T}\|\hat{\beta}^j-\beta^j\|^2\leq \frac{C_1\lambda^2s}{(\Sigma^+_{\min})^2}
\end{equation}
and 
\begin{equation}
\label{prop2ineq3}
 \sum\limits_{j\notin T}\|\hat{\beta}^j\|^2\leq \frac{C_1\lambda^2s}{2(\Sigma^+_{\min})^2}.
\end{equation}
for some constant $C_1>0$.
\end{prop}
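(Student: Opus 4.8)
The plan is to run the by-now-standard basic-inequality argument for group-penalized least squares, with extra care for the coupling across the $K'$ classes. Set $h_k:=\hat\beta_k-\beta_k$ and $h^j:=\hat\beta^j-\beta^j$. Since $(\hat\beta_1,\dots,\hat\beta_{K'})$ is the minimizer in (\ref{est}), comparing its objective value with that at $(\beta_1,\dots,\beta_{K'})$ and expanding $\tfrac12\hat\beta_k'S\hat\beta_k-\tfrac12\beta_k'S\beta_k=\tfrac12 h_k'Sh_k+\beta_k'Sh_k$ gives
\[
\sum_{k=1}^{K'}\tfrac12 h_k'Sh_k\;\le\;\sum_{k=1}^{K'}(\hat\delta_k-S\beta_k)'h_k+\lambda\sum_{j=1}^p\|\beta^j\|-\lambda\sum_{j=1}^p\|\hat\beta^j\|.
\]
On $\mathcal{B}_t$ the linear term is controlled exactly as in the proof of Proposition~\ref{prop1}, namely by $C_0\sum_j\|h^j\|\sqrt{\bar\pi\Sigma_{jj}(\Delta\vee K)t/(N-K)}\le\tfrac{\lambda}{2}\sum_j\|h^j\|$, where the last step uses $\Sigma_{jj}\le\Sigma^+_{\max}$ and the choice (\ref{prop1lambda}) of $\lambda$. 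Splitting the group norms over $T$ and $T^c$, using $\beta^j=0$ and hence $\|\hat\beta^j\|=\|h^j\|$ off $T$ together with $\|\hat\beta^j\|\ge\|\beta^j\|-\|h^j\|$ on $T$, and cancelling $\lambda\sum_{j\in T}\|\beta^j\|$, one is left with an inequality of the shape $\sum_k h_k'Sh_k+c\lambda\sum_{j\notin T}\|h^j\|\le c'\lambda\sum_{j\in T}\|h^j\|$ for absolute constants $c,c'$; dropping the nonnegative quadratic form also re-derives the cone bound $\sum_{j\notin T}\|h^j\|\le 3\sum_{j\in T}\|h^j\|$ already recorded after Proposition~\ref{prop1}.

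Next I would establish a restricted-eigenvalue-type lower bound for the empirical quadratic form on $\mathcal{D}$. Writing $h_k'Sh_k=\sum_i S_{ii}(h_k^i)^2+\sum_{i\ne j}S_{ij}h_k^ih_k^j$, summing over $k$, using $\sum_k(h_k^i)^2=\|h^i\|^2$ and, for the off-diagonal part, the Cauchy--Schwarz bound $\sum_k|h_k^i|\,|h_k^j|\le\|h^i\|\,\|h^j\|$ in the class index, one obtains
\[
\sum_{k=1}^{K'} h_k'Sh_k\;\ge\;S^+_{\min}\sum_{j=1}^p\|h^j\|^2-S^-_{\max}\Big(\sum_{j=1}^p\|h^j\|\Big)^2.
\]
Feeding in the cone bound $\sum_j\|h^j\|\le 4\sum_{j\in T}\|h^j\|$ and then Cauchy--Schwarz over $T$ with $|T|=s$ gives $\big(\sum_j\|h^j\|\big)^2\le 16s\sum_{j\in T}\|h^j\|^2$, and inserting the event-$\mathcal{D}$ bounds $S^+_{\min}\ge\tfrac12\Sigma^+_{\min}$, $S^-_{\max}\le 2\Sigma^-_{\max}$ yields the lower bound $\big(\tfrac{\Sigma^+_{\min}}{2}-32s\Sigma^-_{\max}\big)\sum_{j\in T}\|h^j\|^2+\tfrac{\Sigma^+_{\min}}{2}\sum_{j\notin T}\|h^j\|^2$. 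Combining this with the consequence of the basic inequality and tracking the remaining constants gives (\ref{prop2ineq1}).

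For the last two bounds, the hypothesis $\Sigma^+_{\min}\ge 128s\Sigma^-_{\max}$ makes the coefficient of $\sum_{j\in T}\|h^j\|^2$ in (\ref{prop2ineq1}) at least $\tfrac{\Sigma^+_{\min}}{4}>0$. Discarding the nonnegative $T^c$-term and applying $\sum_{j\in T}\|h^j\|\le\sqrt s\,\big(\sum_{j\in T}\|h^j\|^2\big)^{1/2}$ turns (\ref{prop2ineq1}) into a scalar quadratic inequality $\tfrac{\Sigma^+_{\min}}{4}u^2\le 12\lambda\sqrt s\,u$ in $u:=\big(\sum_{j\in T}\|h^j\|^2\big)^{1/2}$, which forces $u\le 48\lambda\sqrt s/\Sigma^+_{\min}$ and hence (\ref{prop2ineq2}). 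Substituting this back into (\ref{prop2ineq1}) to bound $\tfrac{\Sigma^+_{\min}}{2}\sum_{j\notin T}\|h^j\|^2\le 12\lambda\sqrt s\,u$ gives (\ref{prop2ineq3}) with the same constant. The main obstacle is the middle step: upgrading the deterministic comparison $\mathcal{D}$ between $S$ and $\Sigma$ into a genuine lower bound on $\sum_k h_k'Sh_k$ that is uniform over the cone, while dispatching the multi-class structure so that the inner sum over $k$ is handled by Cauchy--Schwarz in precisely the right place — misplacing it costs a spurious factor of $K'$. Once that bound is secured, the optimality inequality, the noise estimate imported from Proposition~\ref{prop1}, and the final scalar manipulations are all routine.
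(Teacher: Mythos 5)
Your proposal is correct and follows essentially the same route as the paper: the basic optimality inequality combined with the noise bound from Proposition~\ref{prop1} to get $\sum_k h_k'Sh_k\le 12\lambda\sum_{j\in T}\|h^j\|$, the diagonal/off-diagonal decomposition of $h_k'Sh_k$ with Cauchy--Schwarz over the class index to obtain $S^+_{\min}\sum_j\|h^j\|^2-S^-_{\max}\big(\sum_j\|h^j\|\big)^2$, the cone condition and Cauchy--Schwarz over $T$ to produce the $32s\Sigma^-_{\max}$ term on event $\mathcal{D}$, and the final scalar quadratic inequality yielding (\ref{prop2ineq2}) with $C_1=48^2$ and then (\ref{prop2ineq3}) by back-substitution. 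No gaps.
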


\begin{prop}
 \label{prop3}
 Let $\lambda_j=\lambda,j=1,\ldots,p$ with $\lambda$ chosen as $(\ref{prop1lambda})$. Meanwhile, suppose that $\Sigma^+_{\min}\geq 128s\Sigma^-_{\max}$.
 For any $t>0$, on the event $\mathcal{D}\cap\mathcal{B}_t$, we have
 \begin{equation*}
  \sup\limits_{1\leq j\leq p}\|\hat{\beta}^j-\beta^j\|\leq \frac{C_2\lambda}{\Sigma^+_{\min}}=2C_0C_2\sqrt{\frac{\bar{\pi}\Sigma^+_{\max}(\Delta\vee K)t}{(\Sigma^+_{\min})^2(N-K)}},
 \end{equation*}
 where $C_2=4+\frac{\sqrt{C_1}}{8}$.
\end{prop}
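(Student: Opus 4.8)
The plan is to turn the subgradient optimality conditions of the group-LASSO problem (\ref{est}) into a single scalar recursion for $\|\hat{\beta}^j-\beta^j\|$ that is valid for every group index $j$, and then to close that recursion using the cone condition of Proposition~\ref{prop1} and the $\ell_2$ bounds (\ref{prop2ineq2})--(\ref{prop2ineq3}) of Proposition~\ref{prop2}. Throughout, write $h_k:=\hat{\beta}_k-\beta_k$, let $h^j$ be the stacked vector $(h_1^j,\dots,h_{K'}^j)$, and set $W_k:=S\beta_k-\hat{\delta}_k$ with $W^j$ its stacked $j$-th entry. Since $\beta_k=\Sigma^{-1}\delta_k$ gives $\Sigma\beta_k=\delta_k$, we have $W_k=(S-\Sigma)\beta_k+(\delta_k-\hat{\delta}_k)$, and on $\mathcal{B}_t$ one has $\|W^j\|\le C_0\sqrt{\bar{\pi}\Sigma_{jj}(\Delta\vee K)t/(N-K)}\le \lambda/2$ for every $j$, with $\lambda$ the choice (\ref{prop1lambda}) (this $W^j$ is exactly the quantity driving Proposition~\ref{prop1}, and we used $\Sigma_{jj}\le\Sigma^+_{\max}$).

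First I would derive the per-group inequality. The stationarity conditions of (\ref{est}) read: for each $j$ with $\hat{\beta}^j\neq 0$, $(S\hat{\beta}_k-\hat{\delta}_k)^j+\lambda\,\hat{\beta}_k^j/\|\hat{\beta}^j\|=0$ for all $k$; and for each $j$ with $\hat{\beta}^j=0$, $\big\|\big((S\hat{\beta}_k-\hat{\delta}_k)^j\big)_{k=1}^{K'}\big\|\le\lambda$. Using $S\hat{\beta}_k-\hat{\delta}_k=W_k+Sh_k$ and isolating the diagonal term in $(Sh_k)^j=S_{jj}h_k^j+\sum_{u\neq j}S_{ju}h_k^u$, I would rearrange each condition into a bound on $S_{jj}\|h^j\|$. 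On an active group the penalty subgradient has Euclidean norm exactly $\lambda$, while on a group with $\hat{\beta}^j=0$ one has $h^j=-\beta^j$ and the same manipulation applies; both cases collapse to
\begin{equation*}
 S_{jj}\|h^j\|\le \lambda+\|W^j\|+\sum_{u\neq j}|S_{ju}|\,\|h^u\|,\qquad 1\le j\le p.
\end{equation*}
This is the \emph{crux} of the argument: the potentially unbounded ratio $\|\beta^j\|/\|\hat{\beta}^j\|$ never surfaces, precisely because the group-LASSO subgradient has norm exactly $\lambda$ on any active group (when one substitutes $\hat{\beta}^j=\beta^j+h^j$, the offending $\lambda\|h^j\|/\|\hat{\beta}^j\|$ terms cancel).

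Next I would bring in the two good events. On $\mathcal{D}$ one has $S_{jj}\ge S^+_{\min}\ge\tfrac12\Sigma^+_{\min}$ and $|S_{ju}|\le S^-_{\max}\le 2\Sigma^-_{\max}$, so together with $\|W^j\|\le\lambda/2$ on $\mathcal{B}_t$ the display above yields
\begin{equation*}
 \|h^j\|\le \frac{3\lambda}{\Sigma^+_{\min}}+\frac{4\Sigma^-_{\max}}{\Sigma^+_{\min}}\sum_{u}\|h^u\|.
\end{equation*}
It then remains to control $\sum_u\|h^u\|$. Proposition~\ref{prop1} gives $\sum_{u\notin T}\|\hat{\beta}^u\|\le 3\sum_{u\in T}\|h^u\|$, hence $\sum_u\|h^u\|\le 4\sum_{u\in T}\|h^u\|$; Cauchy--Schwarz over the $s=\mathrm{card}(T)$ indices of $T$ together with (\ref{prop2ineq2}) gives $\sum_{u\in T}\|h^u\|\le\sqrt{s}\,\big(\sum_{u\in T}\|h^u\|^2\big)^{1/2}\le \sqrt{C_1}\,\lambda s/\Sigma^+_{\min}$. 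Substituting back and using the hypothesis $\Sigma^+_{\min}\ge 128 s\,\Sigma^-_{\max}$ to absorb the correlation term into a small fraction of $\lambda/\Sigma^+_{\min}$, the right-hand side becomes a universal multiple of $\lambda/\Sigma^+_{\min}$, uniformly in $j$; careful bookkeeping of the constants gives the factor $C_2=4+\sqrt{C_1}/8$, and inserting the value of $\lambda$ from (\ref{prop1lambda}) produces the displayed explicit form. Taking $\sup_{1\le j\le p}$ is immediate because the bound is free of $j$.

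The main obstacle is the first step: squeezing the subgradient conditions into a clean scalar recursion in $\|h^j\|$ without ever dividing by $\|\hat{\beta}^j\|$, and checking that the degenerate case $\hat{\beta}^j=0$ with $j\in T$ reproduces the same inequality rather than a weaker one. A secondary, more routine point is ensuring that $\mathcal{B}_t$ genuinely gives a uniform control of all $p$ scores $\|W^j\|$ simultaneously; this rests on the Gaussian representations $S=\frac{1}{N-K}ZZ'$ and (\ref{deltadist}) combined with the $n_k$-concentration of Lemma~\ref{ncon}, exactly as in the proof of Proposition~\ref{prop1}.
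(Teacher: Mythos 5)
Your proposal is correct and follows essentially the same route as the paper: the group-KKT condition gives $\|(\,(S\hat{\beta}_k-\hat{\delta}_k)_j\,)_{k}\|\leq\lambda$ for every $j$, one splits $e_j'S(\hat{\beta}_k-\beta_k)$ into its diagonal and off-diagonal parts, lower-bounds $S_{jj}$ and upper-bounds $|S_{ju}|$ on $\mathcal{D}$, controls the deviation term by $\lambda$ via the choice (\ref{prop1lambda}) on $\mathcal{B}_t$, and then closes the bound with the cone condition of Proposition~\ref{prop1}, Cauchy--Schwarz over $T$, the bound (\ref{prop2ineq2}), and the hypothesis $\Sigma^+_{\min}\geq 128s\Sigma^-_{\max}$. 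Your constant accounting even comes out slightly sharper ($3+\sqrt{C_1}/8$ versus the paper's $4+\sqrt{C_1}/8$), so the stated bound with $C_2=4+\sqrt{C_1}/8$ follows.
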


For any $\zeta>0$, we define a thresholding function $\phi_{\zeta}(\cdot):\mathbb{R}\to\mathbb{R}$ as
\begin{equation*}
\phi_{\zeta}(x)=
 \begin{cases}
  x,& |x|\geq \zeta\\
  0,& |x|<\zeta.
 \end{cases}
\end{equation*}
When we apply the function $\phi_{\zeta}$ to a vector $\beta$, it means we apply $\phi_{\zeta}$ to each entry of $\beta$. Theorem~\ref{mainthm1}
follows immediately from Proposition~\ref{prop3}, Lemma~\ref{eventDlem} and the definition of $\phi_{\zeta}(\cdot)$, which provides a sufficient condition for the
support recovery of our estimator. In the case that $\Sigma^+_{\min}\approx\Sigma^+_{\max}$, the lower bound on $\min_{j\in T_k}|\beta_k^j|$
is of the order $O(\sqrt{\frac{(\Delta\vee K)\log(p\vee N)}{N}})$, which is similar to the necessary lower bound on the non-trivial entries of $\beta$
for sign consistency of (\ref{ROADest}) when $K=2$, see \cite{kolarliu2013}.
\begin{theorem}
 \label{mainthm1}
 Under the same conditions of Proposition~\ref{prop3} and suppose that there exists some constants $C_3>0$ which
 are large enough such that $N\geq K+C_3\big(\frac{\Sigma^+_{\max}}{\Sigma^+_{\min}}\big)^2s^2\log(p\vee N)$. Then we have, with probability at
 least $1-2\sum\limits_{j=1}^{K}e^{-N\pi_K/16}-\frac{2}{p\vee N}$
 \begin{equation*}
  |\hat{\beta}_k-\beta_k|_{\infty}\leq 4C_0C_2\sqrt{\frac{\bar{\pi}\Sigma^+_{\max}(\Delta\vee K)\log(p\vee N)}{(\Sigma^+_{\min})^2(N-K)}}=:\zeta.
 \end{equation*}
Furthermore, suppose that for any $1\leq k\leq K'$, $\min_{j\in T_k}|\beta_k^j|> 2\zeta$. 
Define $\hat{\hat{\beta}}_k=\phi_{\zeta}(\hat{\beta}_k)$ for $1\leq k\leq K'$, then we have
\begin{equation*}
 \textrm{Supp}(\hat{\hat{\beta}}_k)=\textrm{Supp}(\beta_k),\quad 1\leq k\leq K',
\end{equation*}
with the same probability.
\end{theorem}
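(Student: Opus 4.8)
The plan is to obtain the theorem directly from Proposition~\ref{prop3}, the trivial observation that the absolute value of any coordinate of a vector is at most that vector's $\ell_2$ norm, a single good choice of the free parameter $t$, and a union bound over the auxiliary events $\mathcal{A}$, $\mathcal{B}_t$ and $\mathcal{D}$. First I would take $t=c\log(p\vee N)$ for a fixed constant $c\in[3,4]$; the factor $4C_0C_2$ in the definition of $\zeta$, as opposed to the $2C_0C_2$ appearing in Proposition~\ref{prop3}, is precisely what makes such a $c$ work. Since the hypotheses of Proposition~\ref{prop3} hold by assumption ($\lambda_j=\lambda$ as in (\ref{prop1lambda}) and $\Sigma^+_{\min}\ge 128 s\Sigma^-_{\max}$), on $\mathcal{D}\cap\mathcal{B}_t$ one has $\sup_{1\le j\le p}\|\hat{\beta}^j-\beta^j\|\le C_2\lambda/\Sigma^+_{\min}$, and plugging in $\lambda$ from (\ref{prop1lambda}) with this $t$ shows the right-hand side equals $(\sqrt{c}/2)\,\zeta\le\zeta$.

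Next, for every $k$ and every $j$ the scalar $\hat{\beta}_k^j-\beta_k^j$ is one of the $K'$ entries of the vector $\hat{\beta}^j-\beta^j$, so $|\hat{\beta}_k^j-\beta_k^j|\le\|\hat{\beta}^j-\beta^j\|$; maximizing over $j$ gives $|\hat{\beta}_k-\beta_k|_\infty\le\sup_{1\le j\le p}\|\hat{\beta}^j-\beta^j\|<\zeta$ on $\mathcal{D}\cap\mathcal{B}_t$, uniformly in $k$. For the probability I would bound $\mathbb{P}\big((\mathcal{D}\cap\mathcal{B}_t)^c\big)\le\mathbb{P}(\mathcal{D}^c)+\mathbb{P}(\mathcal{A}^c)+Kpe^{-t}+\tfrac{2Kp}{\sqrt{\pi}t}e^{-t^2}$, using Lemma~\ref{eventDlem} (whose hypothesis is precisely the sample-size condition $N\ge K+C_3(\Sigma^+_{\max}/\Sigma^+_{\min})^2s^2\log(p\vee N)$) to get $\mathbb{P}(\mathcal{D}^c)\le(p\vee N)^{-1}$, Lemma~\ref{ncon} to get $\mathbb{P}(\mathcal{A}^c)\le 2\sum_{k=1}^K e^{-N\pi_k/16}$, and the choice $c\ge 3$ together with $\log K+\log p\le 2\log(p\vee N)$ to get $Kpe^{-t}+\tfrac{2Kp}{\sqrt{\pi}t}e^{-t^2}\le(p\vee N)^{-1}$. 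Adding these yields the stated probability $1-2\sum_{k=1}^K e^{-N\pi_k/16}-2(p\vee N)^{-1}$.

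Finally, for the support-recovery conclusion I would work on the same event $\mathcal{D}\cap\mathcal{B}_t$, on which $|\hat{\beta}_k^j-\beta_k^j|<\zeta$ for all $j,k$. If $j\in T_k$, then the triangle inequality and $\min_{j\in T_k}|\beta_k^j|>2\zeta$ give $|\hat{\beta}_k^j|>2\zeta-\zeta=\zeta$, so $\phi_\zeta(\hat{\beta}_k^j)=\hat{\beta}_k^j\neq 0$; if $j\notin T_k$, then $\beta_k^j=0$, whence $|\hat{\beta}_k^j|=|\hat{\beta}_k^j-\beta_k^j|<\zeta$ and $\phi_\zeta(\hat{\beta}_k^j)=0$. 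Hence $\textrm{Supp}(\hat{\hat{\beta}}_k)=T_k=\textrm{Supp}(\beta_k)$ for every $1\le k\le K'$, on the same event. I do not expect a genuine obstacle here: the analytic content is entirely in Propositions~\ref{prop1}--\ref{prop3} and Lemma~\ref{eventDlem}, and the only point requiring care is the bookkeeping in the second paragraph — checking that a single constant $c$ simultaneously makes the bound of Proposition~\ref{prop3} at most $\zeta$ and the tail terms inherited from Proposition~\ref{prop1} at most $(p\vee N)^{-1}$, and that Lemma~\ref{eventDlem} really does deliver $\mathbb{P}(\mathcal{D}^c)\le(p\vee N)^{-1}$ under the sample-size assumption.
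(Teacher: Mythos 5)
Your proof is correct and matches the paper's: the paper disposes of Theorem~\ref{mainthm1} in a single sentence as an immediate consequence of Proposition~\ref{prop3}, Lemma~\ref{eventDlem} and the definition of $\phi_{\zeta}(\cdot)$, and your write-up supplies exactly the missing bookkeeping (the choice $t=c\log(p\vee N)$ reconciling the constants $2C_0C_2$ and $4C_0C_2$, the coordinatewise bound $|\hat{\beta}_k^j-\beta_k^j|\le\|\hat{\beta}^j-\beta^j\|$, the union bound over $\mathcal{A}$, $\mathcal{B}_t$, $\mathcal{D}$, and the thresholding argument). One caveat: the hypothesis of Lemma~\ref{eventDlem} is not, as you assert, ``precisely'' the theorem's sample-size condition --- it reads $N\ge K+C_3\big[(\Sigma^+_{\max}/\Sigma^-_{\max})^2\vee(\Sigma^+_{\max}/\Sigma^+_{\min})^2\big]\log(p\vee N)$, and since $\Sigma^+_{\min}\ge 128s\Sigma^-_{\max}$ forces $(\Sigma^+_{\max}/\Sigma^-_{\max})^2\ge 128^2s^2(\Sigma^+_{\max}/\Sigma^+_{\min})^2$, the theorem's condition does not by itself imply the lemma's --- but this mismatch is inherited from the paper, which invokes the lemma under the same insufficient hypothesis, rather than introduced by you.
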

In Theorem~\ref{mainthm2}, a lower bound on the estimation error of $\beta$ is given by assuming that $T_1,\ldots,T_{K'}$ are known in advance.
Under the circumstances, the ideal estimators would be $\bar{\beta}_k^{T_k}=S_{T_k,T_k}^{-1}\hat{\delta}_k$ and $\bar{\beta}_k^{T_k^c}=0$ for $1\leq k\leq K'$.
We then calculate $\mathbb{E}\|\bar{\beta}^j-\beta^j\|,1\leq j\leq p$ which can be regarded as benchmarks for the estimation errors.
It confirms the optimality(except the logarithmic term) of the bound in Proposition~\ref{prop3} for $j\in\bigcap_{k=1}^{K'} T_k$ when $\Sigma^+_{\min}\approx\Sigma^+_{\max}$.
It should be noted that under the conditions of Proposition~\ref{prop3}, we have $\Sigma^+_{\min}\|\beta_k\|^2\leq 2\Delta_k$ for $1\leq k\leq K'$.
\begin{theorem}
\label{mainthm2}
Suppose we have access to $T_1,\ldots,T_{K'}$ and $\bar{\beta}_k$ are defined as above for $1\leq k\leq K'$. Let $\underline{\pi}=\min_{2\leq k\leq K}\sqrt{\frac{\pi_1+\pi_{k}}{\pi_1\pi_{k}}}$.
For any $1\leq j\leq p$, define $\kappa_j=\min_{1\leq k\leq K}(\Sigma_{T_k,T_k}^{-1})_{jj}$, $\bar{\Delta}_j:=\sum_{k=1,j\in T_k}^{K'}\Delta_k$,
$K_j:=\sharp\{1\leq k\leq K': j\in T_k\}$ and $\omega_j=\sum_{k=1,j\in T_k}^{K'}|\beta_k^j|^2$, then
\begin{equation*}
 \mathbb{E}\|\bar{\beta}^j-\beta^j\|_2^2\geq \frac{1}{2} \Big[\frac{\bar{\Delta}_j\kappa_j}{N-K}+\frac{K_j\kappa_j\underline{\pi}}{N-K}+\frac{\omega_j}{N-K}\Big].
\end{equation*}
\end{theorem}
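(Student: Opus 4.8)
The plan is to reduce the claim to a statement about a single coordinate $j$ and a single class index $k$, and then run an exact bias--variance analysis of each entry of the oracle estimator, the only substantive input being the first two moments of the inverse of a Wishart matrix. Write $n=N-K$, $\Sigma_k=\Sigma_{T_k,T_k}$, $S_k=S_{T_k,T_k}$, $m_k=s_k$. Since $\bar\beta_k$ and $\beta_k$ both vanish outside $T_k$, we have $\|\bar\beta^j-\beta^j\|_2^2=\sum_{k:\,j\in T_k}\big((\bar\beta_k)_j-(\beta_k)_j\big)^2$, so by linearity of expectation it suffices to prove, for every $k$ with $j\in T_k$, the per-class bound $\mathbb{E}\big|(\bar\beta_k^{T_k})_j-(\beta_k^{T_k})_j\big|^2\ge\tfrac12 n^{-1}\big(\kappa_j\Delta_k+\kappa_j\underline\pi+|\beta_k^j|^2\big)$; summing over such $k$ then produces $\kappa_j\bar\Delta_j$, $K_j\kappa_j\underline\pi$ and $\omega_j$ in the three terms. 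I would also record, from restricting $\Sigma\beta_k=\delta_k$ to $T_k$, the identities $\beta_k^{T_k}=\Sigma_k^{-1}\delta_k^{T_k}$, hence $(\beta_k^{T_k})_j=\beta_k^j=(\delta_k^{T_k})'\Sigma_k^{-1}e_j$ and $(\delta_k^{T_k})'\Sigma_k^{-1}\delta_k^{T_k}=\langle\beta_k,\delta_k\rangle=\Delta_k$, together with $\kappa_j\le(\Sigma_k^{-1})_{jj}$.

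Next I would set up the conditioning. By \cite[Theorem~3.1.2]{Muirhead}, conditionally on $\mathcal A$'s complement aside, conditionally on $\mathcal Y$ the matrix $S$ is independent of $(\hat\delta_1,\dots,\hat\delta_{K'})$, with $nS_k\sim W_{m_k}(\Sigma_k,n)$ a principal submatrix of a Wishart matrix whose law does not depend on $\mathcal Y$, while $\hat\delta_k^{T_k}\mid\mathcal Y\sim\mathcal N(\delta_k^{T_k},\sigma_k^2\Sigma_k)$ with $\sigma_k^2=(n_1+n_{k+1})/(n_1n_{k+1})$ by $(\ref{deltadist})$. Conditioning in addition on $S$, $\bar\beta_k^{T_k}=S_k^{-1}\hat\delta_k^{T_k}$ is Gaussian with mean $S_k^{-1}\delta_k^{T_k}$ and covariance $\sigma_k^2\,S_k^{-1}\Sigma_kS_k^{-1}$; its mean squared error at coordinate $j$ is thus a squared bias plus $\sigma_k^2(S_k^{-1}\Sigma_kS_k^{-1})_{jj}$, and integrating out $\mathcal Y$ and then $S$ and discarding the nonnegative squared bias gives
\[
\mathbb{E}\big|(\bar\beta_k^{T_k})_j-(\beta_k^{T_k})_j\big|^2\ \ge\ \mathrm{Var}_S\!\big((S_k^{-1}\delta_k^{T_k})_j\big)+\mathbb{E}[\sigma_k^2]\cdot\mathbb{E}_S\big(S_k^{-1}\Sigma_kS_k^{-1}\big)_{jj}.
\]

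The technical heart, and the step I expect to be the main obstacle, is evaluating the two $S$-expectations by the inverse-Wishart moment formulas $\mathbb{E}[W^{-1}]=(n-m_k-1)^{-1}\Sigma_k^{-1}$ and, for symmetric $A$ and $n>m_k+3$, $\mathbb{E}[W^{-1}AW^{-1}]=\frac{\Sigma_k^{-1}A\Sigma_k^{-1}}{(n-m_k)(n-m_k-3)}+\frac{\mathrm{tr}(\Sigma_k^{-1}A)\,\Sigma_k^{-1}}{(n-m_k)(n-m_k-1)(n-m_k-3)}$ (with $W=nS_k$; see \cite{Muirhead}). Taking $A=\Sigma_k$ gives $\mathbb{E}_S(S_k^{-1}\Sigma_kS_k^{-1})_{jj}=\frac{n^2(n-1)}{(n-m_k)(n-m_k-1)(n-m_k-3)}(\Sigma_k^{-1})_{jj}\ge(\Sigma_k^{-1})_{jj}\ge\kappa_j$, while taking $A=e_je_j'$, using the identities above and subtracting the squared mean $\big(\tfrac{n}{n-m_k-1}\beta_k^j\big)^2$ gives
\[
\mathrm{Var}_S\!\big((S_k^{-1}\delta_k^{T_k})_j\big)=\frac{n^2(n-m_k+1)\,|\beta_k^j|^2}{(n-m_k)(n-m_k-1)^2(n-m_k-3)}+\frac{n^2(\Sigma_k^{-1})_{jj}\Delta_k}{(n-m_k)(n-m_k-1)(n-m_k-3)}\ \ge\ \frac{|\beta_k^j|^2}{n}+\frac{\kappa_j\Delta_k}{n},
\]
each elementary inequality using only $m_k=s_k\ge1$ (so each of $n-m_k$, $n-m_k-1$, $n-m_k-3$ is $<n$). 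For the sampling-fraction factor I would work on the event $\mathcal A$ of Lemma~\ref{ncon}, on which $\sigma_k^2=\tfrac1{n_1}+\tfrac1{n_{k+1}}\ge\tfrac{2}{3N}\cdot\tfrac{\pi_1+\pi_{k+1}}{\pi_1\pi_{k+1}}\ge\tfrac{2}{3N}\underline\pi^2$, so $\mathbb{E}[\sigma_k^2]\ge\tfrac{2}{3N}\mathbb{P}(\mathcal A)\underline\pi^2$; since $\tfrac1{\pi_1}+\tfrac1{\pi_k}\ge 2(\pi_1\pi_k)^{-1/2}\ge4$ forces $\underline\pi\ge2$, the product $\mathbb{E}[\sigma_k^2]\cdot\mathbb{E}_S(S_k^{-1}\Sigma_kS_k^{-1})_{jj}$ is at least $\tfrac{2}{3N}\mathbb{P}(\mathcal A)\underline\pi^2\kappa_j\ge\tfrac{\underline\pi\kappa_j}{2n}$ as soon as $N$ exceeds a small absolute multiple of $K$ (implicit in the setup, since $n>s_k+3$ is needed anyway). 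Inserting these three estimates into the preceding display and summing over $k$ with $j\in T_k$ yields the theorem, the factor $\tfrac12$ absorbing the accumulated slack.
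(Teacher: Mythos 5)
Your proposal is correct and follows essentially the same route as the paper's proof: the same reduction to a per-class, per-coordinate mean squared error, the independence of $S$ and $\hat{\delta}_k$ together with the conditional Gaussian law (\ref{deltadist}), and the first two moments of the inverse Wishart as the only substantive input (you invoke the operator-level identity for $\mathbb{E}[W^{-1}AW^{-1}]$ where the paper uses the entrywise covariance of $A_{j:}$, but the resulting expressions coincide term by term). If anything your write-up is the more complete one, since you carry the elementary ratio inequalities and the observation $\underline{\pi}\geq 2$ through to the stated factor $\frac{1}{2}$ where the paper stops at an ``$\approx$''; the only caveat is that your bound $\mathbb{E}[\sigma_k^2]\,\mathbb{E}_S(S_k^{-1}\Sigma_k S_k^{-1})_{jj}\geq \underline{\pi}\kappa_j/(2(N-K))$ silently needs $\mathbb{P}(\mathcal{A})$ close to $1$ and $N$ at least a modest multiple of $K$, which is the same implicit assumption the paper makes when it replaces $n_k$ by $N\pi_k$.
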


\section{Algorithm to solve (\ref{est})}
In this section, we briefly discuss how to adapt one existing algorithm to solve the minimization problem in (\ref{est}).
Let 
$$
f(\beta_1,\ldots,\beta_{K'}):=\frac{1}{2}\sum\limits_{k=1}^{K'}\beta_k'S\beta_k-\sum\limits_{k=1}^{K'}\hat{\delta}_{k}'\beta_k.
$$
We will utilize the scheme in \cite{liu2010efficient}. The method attempts to approximate $f(\beta_1,\ldots,\beta_{K'})$ by
\begin{equation*}
 Af_{(\tilde{\beta}_1,\ldots,\tilde{\beta}_{K'})}(\beta_1,\ldots,\beta_K)=f(\tilde{\beta}_1,\ldots,\tilde{\beta}_{K'})+\left<\tilde{\nabla}_f,\beta-\tilde{\beta}\right>+\frac{L}{2}||\beta-\tilde{\beta}||^2.
\end{equation*}
The parameter $L>0$ controls the deviation of $\beta$ from $\tilde{\beta}$ and $\tilde{\nabla}_f$ denotes the gradient of $f$ at $(\tilde{\beta}_1,\ldots,\tilde{\beta}_{K'})$.
Then the accelerated gradient algorithm is applied to the function $Af_{(\tilde{\beta}_1,\ldots,\tilde{\beta}_{K'})}(\beta_1,\ldots,\beta_K)+\sum\limits_{j=1}^p\lambda_j\|\beta^j\|$. 
It updates $(\tilde{\beta}_1,\ldots,\tilde{\beta}_{K'})$ and $(\hat{\beta}_1,\ldots,\hat{\beta}_{K'})$ alternatively. One of the 
key points of this algorithm is that the solution of the following optimization problem,
\begin{equation*}
 \hat{v}:=\underset{v\in\mathbb{R}^K}{\arg\min}\frac{1}{2}||v-x||^2+\lambda||v||
\end{equation*}
has a closed form as $\hat{v}:=\frac{(\|x\|-\lambda)_{+}}{\|x\|}x$. This algorithm inherits the $O(1/k^2)$ convergence rate of the accelerated 
gradient method.

\section{Numerical Simulations}\label{numericsec}
In this section, we will compare the performance of (\ref{est}) and (\ref{LPDestimator}) on simulated data. As stated in \cite{cailiu2011},
(\ref{LPDestimator}) can be formulated into a linear programming(LP) problem. The built-in LP solver in MATLAB works efficiently when $p$ is not tremendous. 
Actually, we set $n_1=n_2=n_3=20$ and $p=200$. The purpose of this simulation is to demonstrate the power
of our estimator in variable selection. The result reveals that by implementing sparse LDA individually from (\ref{LPDestimator}),
some nuisance features are mis-selected into the model. This can be prevented by our estimator (\ref{est}).
It should be noted that $\Sigma$ and $\mu_1,\mu_2,\mu_3$ are chosen quite generally without much special design in the simulation. 
Let $\Sigma\in\mathcal{S}^{p}_{+}$ be
\begin{equation*}
 \Sigma(4,1:3)=(1/4,1/3,1/4),\quad \Sigma(5,1:3)=(1/5,-1/4,1/5),\quad \textrm{diag}(\Sigma)=1.
\end{equation*}
Then we set $\beta_{1}=(-2,3,1,0,\ldots,0)^{\top}\in\mathbb{R}^p$, $\beta_2=(1,-2,-1.2,0,\ldots,0)\in\mathbb{R}^p$ and $\mu_1=0$.
The vectors $\mu_2$ and $\mu_3$ are determined in line with the facts that $\mu_2=\Sigma\beta_1$ and $\mu_3=\Sigma\beta_2$. Denote $\tilde{\beta}_1$
and $\tilde{\beta}_2$ the solutions obtained from the LPD estimators (\ref{LPDestimator}) independently.
Figure~\ref{sim1_indep} shows how the entries of $\tilde{\beta}_1$ and $\tilde{\beta}_2$ vary accordingly as $\lambda$ grows.
The variable selection process of $\tilde{\beta}_1$ and $\tilde{\beta}_2$ indicates the weakness in estimating $\beta_1$ and $\beta_2$
separately, owing to the scarcity of data. Indeed, incorporating inessential features occurs frequently when $N$ is small enough compared with $p$.\\
\begin{figure}[h]
\centering
\begin{subfigure}{.5\textwidth}
  \centering
  \includegraphics[height=3.0in,width=3.2in]{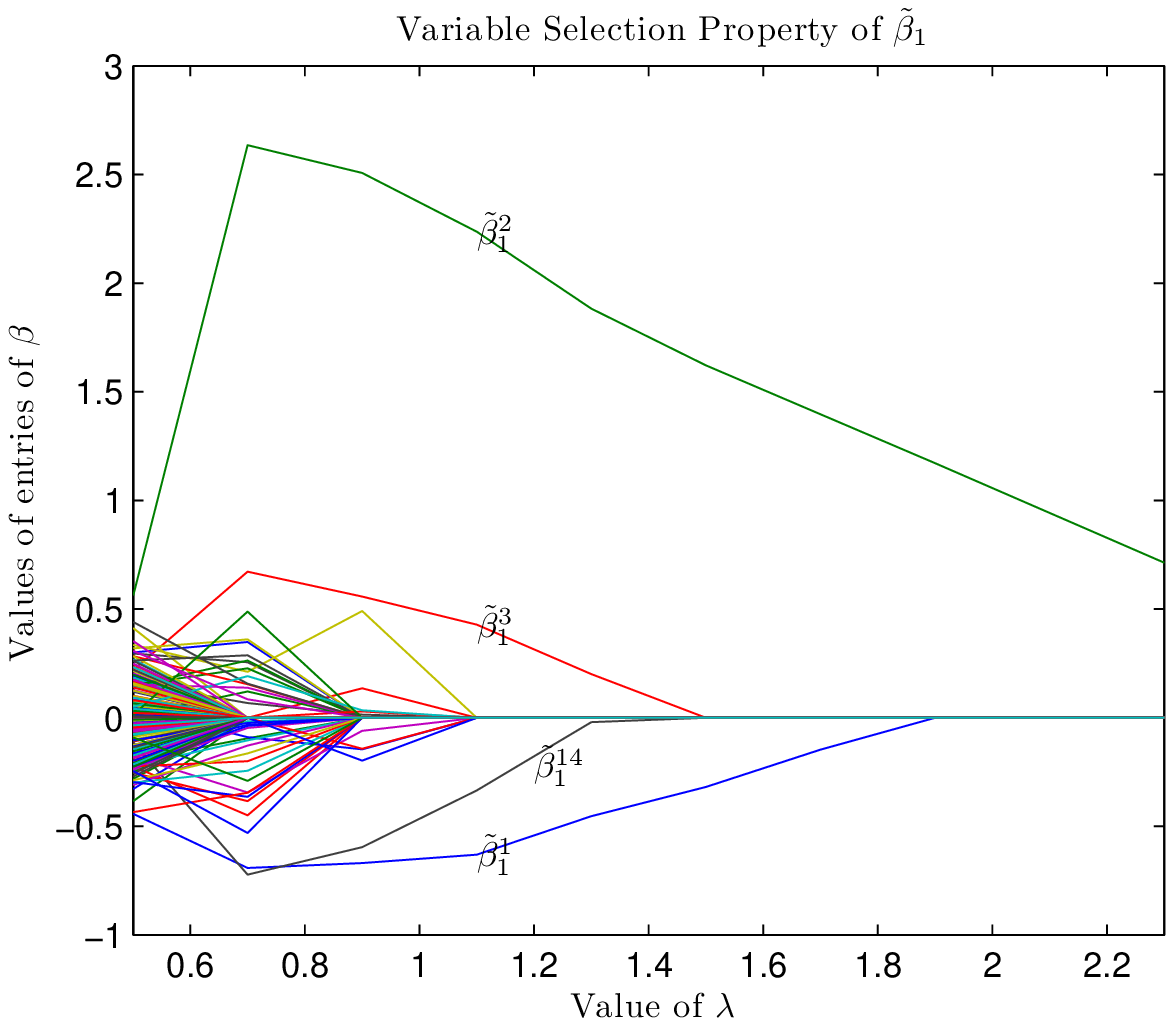}
  \caption{Variable Selection property of $\tilde{\beta}_1$}
  \label{sim1_indep_sub1}
\end{subfigure}%
\begin{subfigure}{.5\textwidth}
  \centering
  \includegraphics[height=3.0in,width=3.2in]{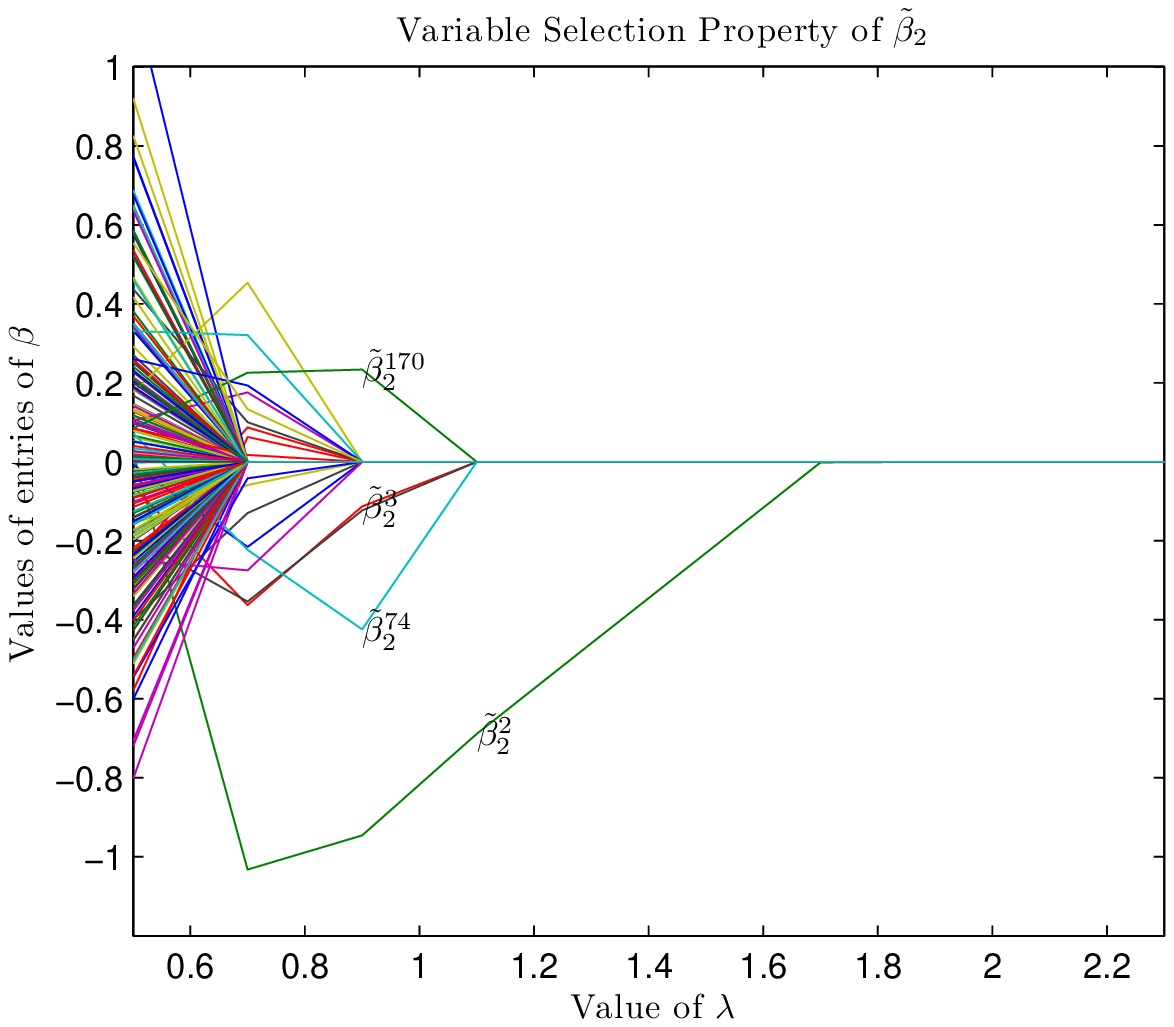}
  \caption{Variable Selection property of $\tilde{\beta}_2$}
  \label{sim1_indep_sub2}
\end{subfigure}
\caption{Simulation $1$: Variable selection properties of $\tilde{\beta}_1$ and $\tilde{\beta}_2$ which are estimated individually via the LPD estimator
when $\lambda$ is increasing. The non-vanished entries of $\tilde{\beta}_1$ and $\tilde{\beta}_2$ are depicted in the plots. The sample size $n_1=n_2=n_3=20$,
which is much smaller than the number of features $p=200$. The result apparently attests that by estimating the Bayes' directions separately,
it is likely to mis-include the nuisance features into our model.}
\label{sim1_indep}
\end{figure}
Then we switch to apply group sparsity in estimating $\beta_1$ and $\beta_2$ together. By choosing $\lambda_1=\ldots=\lambda_p=\lambda$,
our estimator works as follows
\begin{equation*}
 (\hat{\beta}_1,\hat{\beta}_2):=\underset{(\beta_1,\beta_2)\in(\mathbb{R}^p,\mathbb{R}^p)}{\arg\min}\frac{1}{2}
 \sum\limits_{k=1}^2\beta_k'S\beta_k-\sum\limits_{k=1}^2\hat{\delta}_k'\beta_k+\lambda\sum\limits_{j=1}^p||\beta^{j}||.
\end{equation*}
The variable selection property of $(\hat{\beta}_1,\hat{\beta}_2)$ is also examined as given in Figure~\ref{sim1_group}. Compared with
Figure~\ref{sim1_indep}, it is evident that the grouped sparse LDA works better in feature selection.\\
\begin{figure}[h]
\centering
\begin{subfigure}{.5\textwidth}
  \centering
  \includegraphics[height=3.0in,width=3.2in]{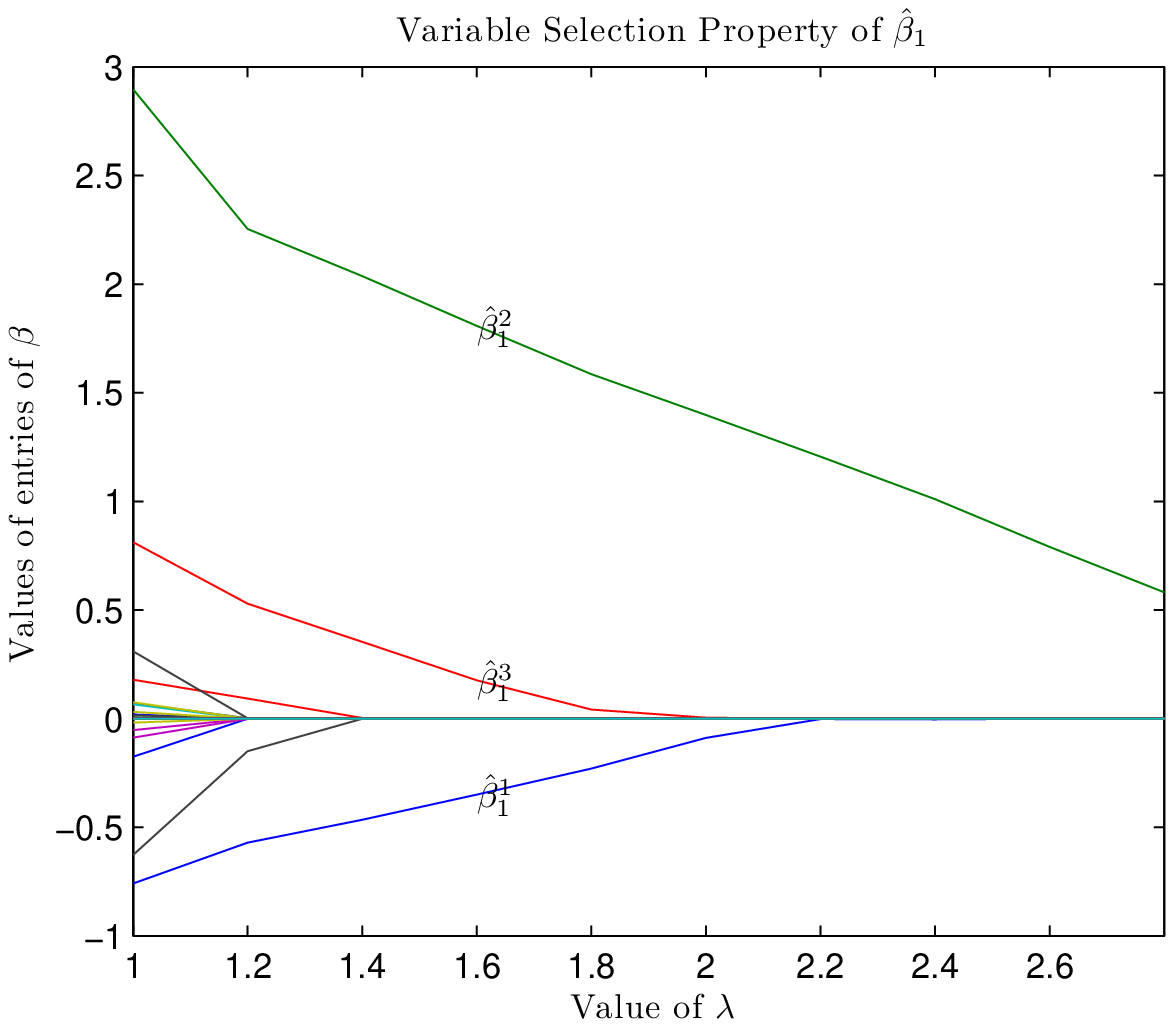}
  \caption{Variable Selection property of $\hat{\beta}_1$}
  \label{sim1_group_sub1}
\end{subfigure}%
\begin{subfigure}{.5\textwidth}
  \centering
  \includegraphics[height=3.0in,width=3.2in]{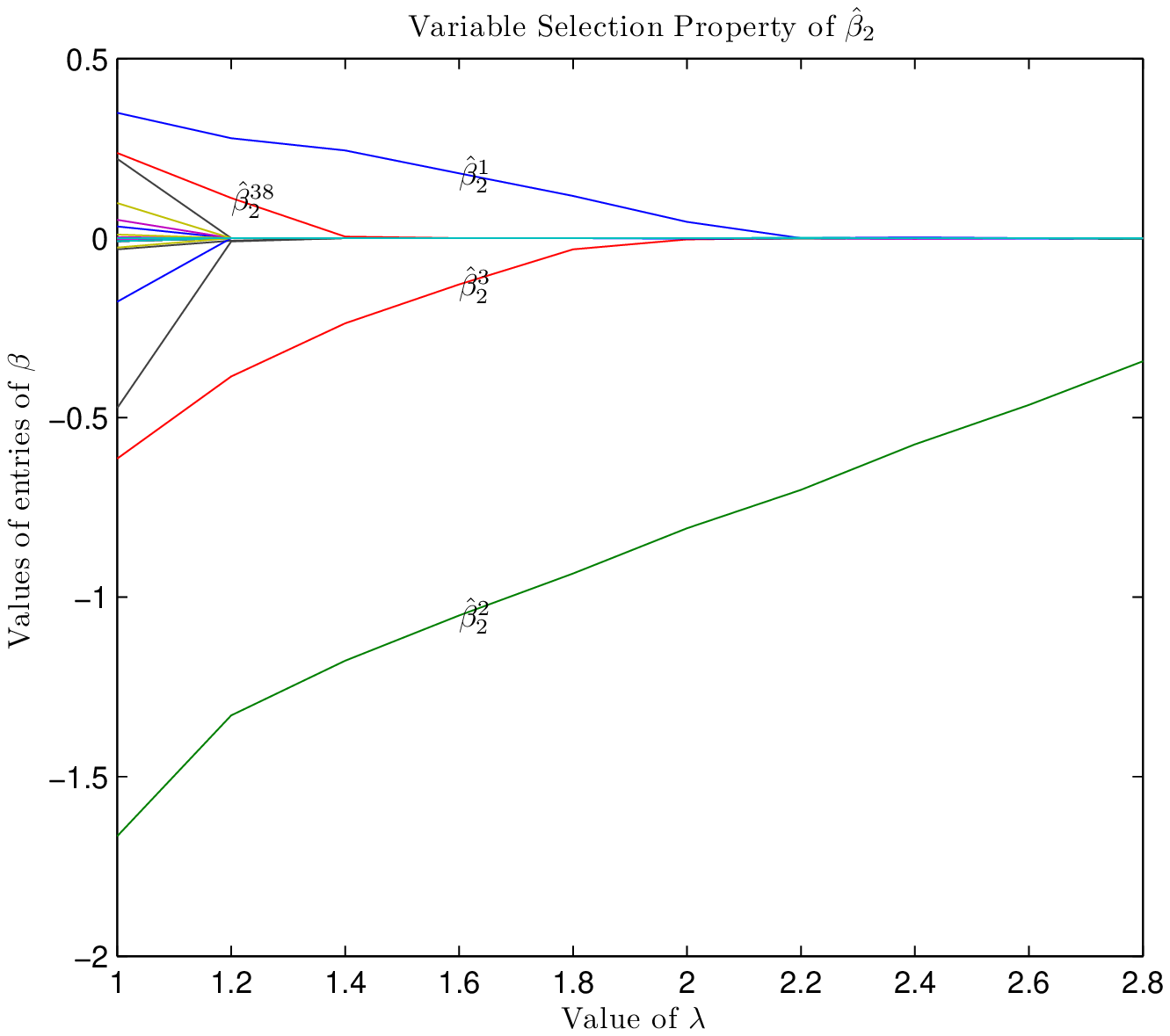}
  \caption{Variable Selection property of $\hat{\beta}_2$}
  \label{sim1_group_sub2}
\end{subfigure}
\caption{Simulation $1$: Variable selection properties of $(\hat{\beta}_1,\hat{\beta}_2)$ which are estimated through exploiting grouped sparsity
with $\lambda$ growing. It confirms that both $\hat{\beta}_1$ and $\hat{\beta}_2$ are able to filter out the negligible features when $\lambda$ is 
attentively chosen.}
\label{sim1_group}
\end{figure}
In our second simulation, we consider more complex $\Sigma$ and $T_1\neq T_2$. In fact, $\Sigma$ is chosen as:
\begin{equation*}
 \Sigma_{i,j}=\frac{1}{3^{|i-j|}},\quad\text{for } 1\leq i, j\leq p/2\quad\textrm{and}\quad\textrm{diag}(\Sigma)=1
\end{equation*}
Therefore, the correlations exist exclusively within the first $p/2$ features and the remaining ones are pure noise. 
Let $\beta_1=(-1.5,1,0,2,0,\ldots,0)^{\top}\in\mathbb{R}^p$, $\beta_2=(1,-1.8,-2,0,\ldots,0)^{\top}\in\mathbb{R}^p$ and $\mu_1=0$.
Clearly, $\beta_1$ and $\beta_2$ have different supports. We sampled $n_1=n_2=n_3=20$ data points.  Since $T_1\neq T_2$, it is not likely that the grouped sparsity estimator
can identify the features correctly for both $\hat{\beta}_1$ and $\hat{\beta}_2$. For this reason, there is no obvious evidence of advantages 
for either (\ref{est}) or (\ref{LPDestimator}) in the same sense of feature selection as in the
former simulation.
Instead, we inspect the $l_2$ magnitude of Bayes directions for each feature, $i.e.$, the values of $||\tilde{\beta}^j||$ and $||\hat{\beta}^j||$
for $j=1,\ldots,p$. The outcome is presented in Figure~\ref{sim2}.
\begin{figure}[h]
\centering
\begin{subfigure}{.5\textwidth}
  \centering
  \includegraphics[height=3.0in,width=3.1in]{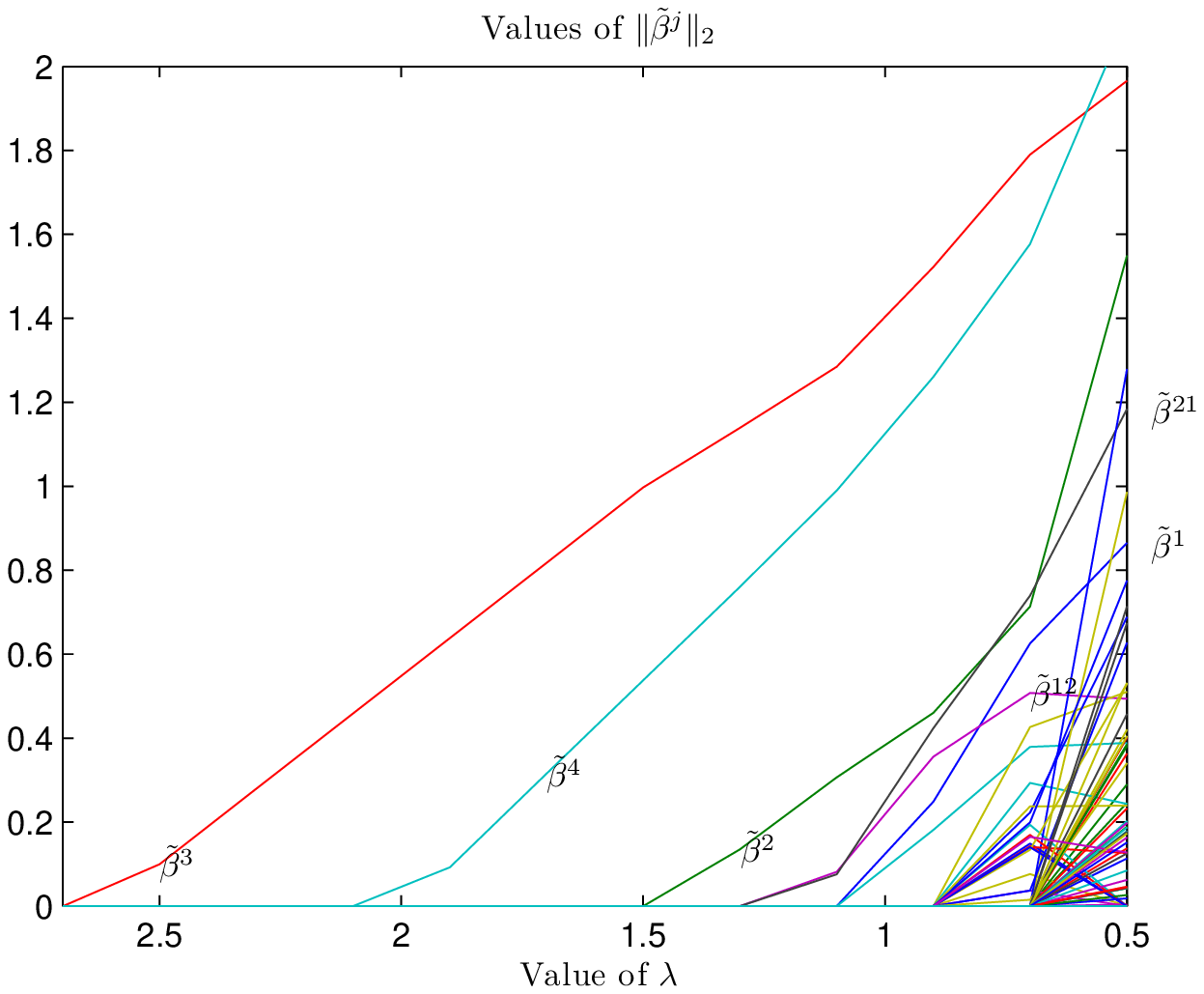}
  \caption{Values of $\|\tilde{\beta}^j\|_2$}
  \label{sim2_single}
\end{subfigure}%
\begin{subfigure}{.5\textwidth}
  \centering
  \includegraphics[height=3.0in,width=3.1in]{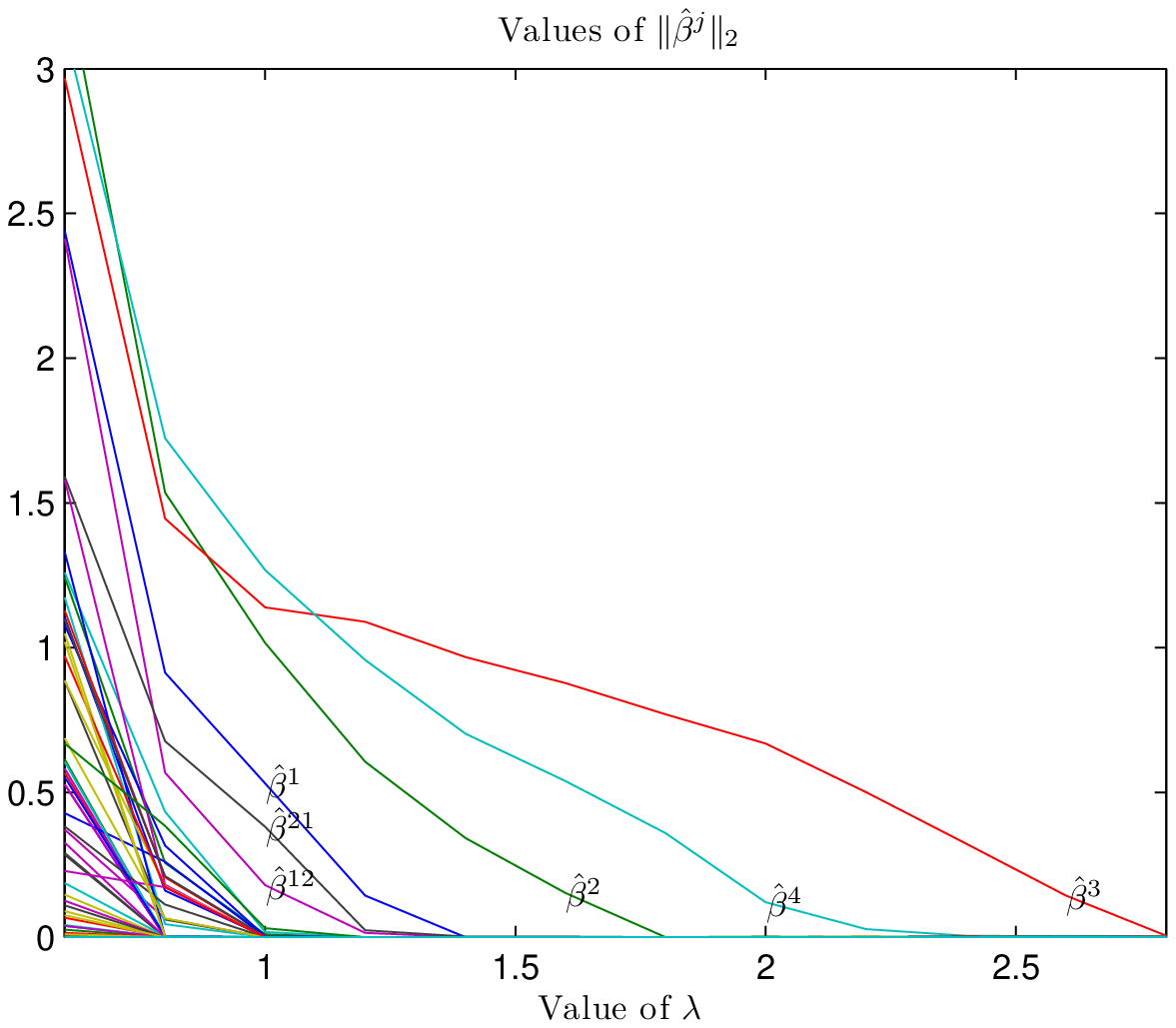}
  \caption{Values of $\|\hat{\beta}^j\|_2$}
  \label{sim2_group}
\end{subfigure}
\caption{Simulation $2$: values of $\|\hat{\beta}^j\|_2$ and $\|\tilde{\beta}^j\|_2$ for $j=1,\ldots,p$. In sub-Figure~\ref{sim2_single},
it shows that the estimators $(\tilde{\beta}_1,\tilde{\beta}_2)$ places the irrelative features $12$ and $21$ in front of the relative feature $1$
as being more important. On the contrary, the grouped sparsity estimator $(\hat{\beta}_1,\hat{\beta}_2)$ produces the desired performance. The result in sub-Figure~\ref{sim2_group} entails that the four most
informative features are actually $T_1\cup T_2$.}
\label{sim2}
\end{figure}

\section{Experiments on real datasets}\label{realsim}
In this section, we will implement our estimators on several datasets. Our experiment is conducted on three datasets: GLIOMA dataset,
MLL dataset, SRBCT dataset. These pre-processed datasets are available from \cite{yang2006stable}. In the GLIOMA dataset,
there are $4434$ genes features chosen from $12625$ features with largest absolute values of $t$-statistics. The dataset contains $50$ samples in four classes with $n_1=14, n_2=7, n_3=14, n_4=15$.
We split the data into a training set and a testing set. The training set contains $11,5,11,12$ samples from the four classes respectively.
The remaining samples are treated as testing data. The MLL dataset includes $72$ samples from three classes with $5848$ features. The authors of
\cite{yang2006stable} already split the datasets into a training set and testing set. Therefore, we directly adopt our estimator to the training data.
In the training set, $n_1=20, n_2=17, n_3=20$. In the testing set, it provides $4,3,8$ samples for each classes. In the SRBCT dataset, 
there are $83$ samples from four classes. The number of gene features is $2308$. The number of samples for each class is $n_1=29, n_2=11, n_3=18, n_4=25$.
We also split the dataset into a training set and a testing set. In the training set, there are $26,9,16,22$ samples for each class.\\
To run LDA, $(\hat{\beta}_1,\hat{\beta}_2,\hat{\beta}_3)$ or $(\hat{\beta}_1,\hat{\beta}_2)$ will be estimated from the training set and be employed 
to predict the labels of the testing data. Define $\hat{\pi}_j=\frac{n_j}{N}$ as the estimator of $\pi_j, j=1,2,3$ where $n_j$ is based on the training set.
The performance is certainly measured by the predicting error rate on the testing set. To demonstrate the efficiency of
our estimator, we will compare our grouped LASSO estimator to the estimator (\ref{est1}).
The regularization parameter $\lambda$ is chosen by $5$-folded cross validation on the training data. In the case that the error rates happen to be equal for different
$\lambda$, we choose the largest one. The exactly same approaches will be applied to (\ref{est}) and (\ref{est1}).
The misclassification error rates are reported in Table~\ref{realexptable}, which shows that (\ref{est}) and (\ref{est1}) have matching performance on
SRBCT and MLL datasets. However, our grouped LASSO estimator (\ref{est}) outperforms all the other estimators on GLIOMA dataset.
\begin{table}[ht]
\centering 
\begin{tabular}{c c c c} 
\hline\hline 
Estimator & GLIOMA & SRBCT & MLL \\ [0.5ex] 
$\hat{\beta}$ & 0.00 & 0.00 & 0.00\\
&(0.15)&(0.00)&(0.00)\\
$\tilde{\beta}$ & 0.18 & 0.00 & 0.00 \\
&(0.12)&(0.00)&(0.00)\\
Naive Bayes&0.64&0.90&0.53\\
$\tilde{\beta}:=S^{+}\hat{\mu}$&0.09&0.60&0.07\\
\hline 
\end{tabular}
\caption{Experiments on Real Datasets: $\hat{\beta}$ denotes our estimator of grouped LASSO type and $\tilde{\beta}$ denotes the estimator (\ref{est1}).
The regularization parameter $\lambda$ is concluded by $5$-folded cross validation. After selecting the best value of $\lambda$, 
the error rates are reported based on the testing data and the standard variances are also reported in the parentheses, measured in $5$-folded cross validation on training data. 
In comparison, the performance of Naive Bayes classifier is also given for all the three datasets. Another trivial estimator is constructed
via the pseudo-inverse of $S$.
} 
\label{realexptable}
\end{table}

\section{Proofs}
We begin by stating and proving two preliminary lemmas. Lemma~\ref{devlem1} is related to the concentration of $S\beta_k-\hat{\delta}_k$
for $1\leq k\leq K'$, while Lemma~\ref{eventDlem} will show that event $\mathcal{D}$ holds with high probability.
The vectors $e_1,\ldots,e_p$ represent the standard basis vectors in $\mathbb{R}^p$.
\begin{lemma}
\label{devlem1}
 For $1\leq k\leq K'$, $1\leq j\leq p$ and any $t>0$, then conditioned on $\mathcal{A}$, we have with probability at least $1-\frac{2}{\sqrt{\pi}t}e^{-t^2}-e^{-t}$,
 \begin{equation*}
  \big|e_j'S\beta_k-e_j'\hat{\delta}_k\big|\leq C_0\Big[\sqrt{\frac{\Sigma_{jj}\Delta_k t}{N-K}}\bigvee \frac{\sqrt{\Sigma_{jj}\Delta_k}t}{N-K}\bigvee \sqrt{\frac{24(\pi_1+\pi_{k+1})\Sigma_{jj}t}{N\pi_1\pi_{k+1}}}\Big]
 \end{equation*}
\end{lemma}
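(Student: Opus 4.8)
The plan is to use that $S$ is independent of both $\mathcal Y$ and $\hat\delta_k$, together with the identities $\Sigma\beta_k=\delta_k$ (so $e_j'\Sigma\beta_k=e_j'\delta_k$ and $\Delta_k=\beta_k'\Sigma\beta_k$), to split
\begin{equation*}
e_j'S\beta_k-e_j'\hat\delta_k=\underbrace{e_j'(S-\Sigma)\beta_k}_{(\mathrm I)}\;-\;\underbrace{e_j'(\hat\delta_k-\delta_k)}_{(\mathrm{II})},
\end{equation*}
to bound $(\mathrm I)$ and $(\mathrm{II})$ separately, and finally to combine them by a union bound, letting one universal constant $C_0$ absorb every constant that is produced along the way.

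For $(\mathrm{II})$: conditionally on $\mathcal Y$, (\ref{deltadist}) gives $e_j'(\hat\delta_k-\delta_k)\sim\mathcal N\big(0,\tfrac{n_1+n_{k+1}}{n_1n_{k+1}}\Sigma_{jj}\big)$, and on the event $\mathcal A$ of Lemma~\ref{ncon} we have $n_1\ge N\pi_1/2$ and $n_{k+1}\ge N\pi_{k+1}/2$, so the variance is at most $\tfrac{2(\pi_1+\pi_{k+1})}{N\pi_1\pi_{k+1}}\Sigma_{jj}$. A standard Gaussian tail inequality (Mills' ratio form) then controls $|(\mathrm{II})|$, producing the third term of the asserted maximum together with the $\tfrac{2}{\sqrt\pi t}e^{-t^2}$ part of the failure probability.

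For $(\mathrm I)$: since $S=\tfrac{1}{N-K}ZZ'=\tfrac{1}{N-K}\sum_{i=1}^{N-K}Z_iZ_i'$ with $Z_i$ i.i.d.\ $\mathcal N(0,\Sigma)$ and independent of $\mathcal Y$, no conditioning is needed, and
\begin{equation*}
(\mathrm I)=\frac{1}{N-K}\sum_{i=1}^{N-K}\big[(e_j'Z_i)(Z_i'\beta_k)-e_j'\Sigma\beta_k\big]
\end{equation*}
is a normalized sum of i.i.d.\ centered variables, each the product of the jointly Gaussian pair $a_i=e_j'Z_i\sim\mathcal N(0,\Sigma_{jj})$ and $b_i=Z_i'\beta_k\sim\mathcal N(0,\Delta_k)$. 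Such a product is sub-exponential with $\|a_ib_i-\mathbb E(a_ib_i)\|_{\psi_1}\le C\sqrt{\Sigma_{jj}\Delta_k}$, so Bernstein's inequality for i.i.d.\ sub-exponential sums bounds $|(\mathrm I)|$, for every $t>0$, by a universal multiple of $\sqrt{\tfrac{\Sigma_{jj}\Delta_k\,t}{N-K}}\bigvee\tfrac{\sqrt{\Sigma_{jj}\Delta_k}\,t}{N-K}$, with failure probability $2e^{-t}$ (absorbed into the $e^{-t}$ of the statement after rescaling $t$). Equivalently, one may condition on $W:=Z'\beta_k$ and write $(\mathrm I)=\tfrac{\delta_k^j}{\Delta_k}\big(\tfrac{\|W\|^2}{N-K}-\Delta_k\big)+\tfrac{1}{N-K}\sum_i W_i\big(e_j'Z_i-\tfrac{W_i\delta_k^j}{\Delta_k}\big)$, controlling the first summand via the Laurent--Massart $\chi^2$ concentration inequality (using $\|W\|^2/\Delta_k\sim\chi^2_{N-K}$) and the second, conditionally Gaussian, summand via a Gaussian tail bound.

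A union bound over the two events, with $C_0$ chosen to dominate every constant above, then yields the claim on $\mathcal A$. The step I expect to require the most care is the control of $(\mathrm I)$: one must verify that the sub-exponential parameter of $(e_j'Z_i)(Z_i'\beta_k)$ depends on $(j,k)$ only through $\sqrt{\Sigma_{jj}\Delta_k}$ — in particular not through the cross term $e_j'\delta_k$, which is handled by the elementary Cauchy--Schwarz estimate $|e_j'\delta_k|=|e_j'\Sigma\beta_k|\le\sqrt{\Sigma_{jj}}\sqrt{\beta_k'\Sigma\beta_k}=\sqrt{\Sigma_{jj}\Delta_k}$ — and that all constants produced are genuinely universal, so that a single $C_0$ works uniformly in $j$ and $k$. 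The Gaussian and $\chi^2$ tail estimates themselves are routine.
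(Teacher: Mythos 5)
Your proposal is correct and follows essentially the same route as the paper: the same decomposition $e_j'S\beta_k-e_j'\hat\delta_k=e_j'(S-\Sigma)\beta_k-e_j'(\hat\delta_k-\delta_k)$ (using $\delta_k=\Sigma\beta_k$), a conditional Gaussian tail bound on event $\mathcal{A}$ for the second piece, and Bernstein's inequality for a sum of i.i.d.\ sub-exponential products of jointly Gaussian variables for the first. Your explicit $\psi_1$-norm bound $\|a_ib_i\|_{\psi_1}\leq C\sqrt{\Sigma_{jj}\Delta_k}$ is in fact slightly more careful than the paper's second-moment computation, but the argument is the same.
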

\begin{proof}
We know that $|e_j'S\beta_k-e_j'\hat{\delta}_k|\leq |e_j'S\beta_k-e_j'\Sigma\beta_k|+|e_j'\delta_k-e_j'\hat{\delta}_k|$,
thanks to the fact that $\delta_k=\Sigma\beta_k$. Based on (\ref{deltadist}), we get,
\begin{equation*}
 e_j'\hat{\delta}_k|\mathcal{Y}\sim\mathcal{N}\Big(e_j'\delta_k,\frac{(n_1+n_{k+1})\Sigma_{jj}}{n_1n_{k+1}}\Big).
\end{equation*}
By the concentration of Gaussian random variable, we have for any $t>0$,
\begin{equation*}
 \mathcal{P}\Big(|e_j'\hat{\delta}_k-e_j'\delta_k|\geq \sqrt{\frac{24(\pi_1+\pi_{k+1})\Sigma_{jj}t}{N\pi_1\pi_{k+1}}}\Big|\mathcal{A}\Big)\leq \frac{2}{\sqrt{\pi}t}e^{-t^2},
\end{equation*}
where the event $\mathcal{A}$ is defined in Lemma~\ref{ncon}. Now we seek to bound $|e_j'S\beta_k-e_j'\Sigma\beta_k|$. 
Define
\begin{equation*}
U=e_j'S\beta_k-e_j'\Sigma\beta_k=\frac{1}{N-K}\sum\limits_{i=1}^{N-K}(e_j'Z_{:,i}Z_{:,i}'\beta_k-e_j'\Sigma\beta_k)=\frac{1}{N-K}\sum\limits_{i=1}^{N-K}U_i,
\end{equation*}
where $U_i=e_j'Z_{:,i}Z_{:,i}'\beta_k-e_j'\Sigma\beta_k$ for $1\leq i\leq N-K$. Then we see that $\mathbb{E}U_i=0$ and $U_1,\ldots,U_{N-K}$
are $i.i.d.$ sub-exponential random variable. Meanwhile,
\begin{equation*}
 \mathbb{E}U_i^2\leq \mathbb{E}(e_j'Z_{:,i}Z_{:,i}'\beta_k)^2\leq\sqrt{\mathbb{E}(e_j'Z_{:,i})^4\mathbb{E}(Z_{:,i}'\beta_k)^4}\leq C_0'\Sigma_{jj}\Delta_k
\end{equation*}
By Bernstein inequality for the sum of independent sub-exponential random variable, such as \cite[Corollary 5.17]{vershynin2010introduction} we get
\begin{equation*}
 \mathbb{P}\Big(U\geq C_0 \sqrt{\Sigma_{jj}\Delta_k}\big[\sqrt{\frac{t}{N-K}}\vee \frac{t}{N-K}\big]\Big)\leq e^{-t}
\end{equation*}
for some constant $C_0>0$.
\end{proof}

\begin{lemma}
 \label{eventDlem}
 Suppose that
 $$
 N\geq K+C_3\Big[\Big(\frac{\Sigma^+_{\max}}{\Sigma^-_{\max}}\Big)^2\vee\Big(\frac{\Sigma^+_{\max}}{\Sigma^+_{\min}}\Big)^2\Big]\log(p\vee N)
 $$
 for some constant $C_3>0$, there exists an event $\mathcal{D}$ with $\mathbb{P}(\mathcal{D})\geq 1-\frac{1}{p\vee N}$ such that on $\mathcal{D}$,
 \begin{equation*}
  S^+_{\min}\geq \frac{\Sigma^+_{\min}}{2}\quad\textrm{and}\quad S^-_{\max}\leq 2\Sigma^-_{\max}
 \end{equation*}
\end{lemma}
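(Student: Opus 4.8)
The plan is to use the representation $S=\frac{1}{N-K}ZZ'$ with columns $Z_{:,1},\dots,Z_{:,N-K}$ i.i.d. $\mathcal{N}(0,\Sigma)$, and to control the diagonal and off-diagonal entries of $S$ separately, each by an entrywise concentration bound followed by a union bound over the at most $p^2$ entries. A preliminary observation simplifies the bookkeeping: since $\Sigma\in\mathcal{S}_+^p$ we have $|\Sigma_{ij}|\le\sqrt{\Sigma_{ii}\Sigma_{jj}}\le\Sigma^+_{\max}$, hence $\Sigma^-_{\max}\le\Sigma^+_{\max}$, and trivially $\Sigma^+_{\min}\le\Sigma^+_{\max}$; therefore the bracket in the hypothesis is $\ge 1$, and the assumed bound on $N$ implies both $N-K\ge C_3(\Sigma^+_{\max}/\Sigma^-_{\max})^2\log(p\vee N)$ and the cruder $N-K\ge C_3\log(p\vee N)$.

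For the diagonal, fix $j$ and note that $(N-K)S_{jj}/\Sigma_{jj}=\sum_{l=1}^{N-K}(Z_{jl}/\sqrt{\Sigma_{jj}})^2$ is a $\chi^2_{N-K}$ variable. The standard lower-tail bound for chi-squares (Laurent--Massart) gives $\mathbb{P}(S_{jj}<\tfrac12\Sigma_{jj})\le e^{-(N-K)/16}$; since $\Sigma_{jj}\ge\Sigma^+_{\min}$, off this event $S_{jj}\ge\tfrac12\Sigma^+_{\min}$. A union bound over $j=1,\dots,p$ together with $N-K\ge C_3\log(p\vee N)$ yields $S^+_{\min}\ge\tfrac12\Sigma^+_{\min}$ with probability at least $1-p\,(p\vee N)^{-C_3/16}\ge 1-\tfrac12(p\vee N)^{-1}$ once $C_3$ is chosen large enough.

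For the off-diagonal entries, fix $i\neq j$ and write
\[
S_{ij}-\Sigma_{ij}=\frac{1}{N-K}\sum_{l=1}^{N-K}\bigl(Z_{il}Z_{jl}-\Sigma_{ij}\bigr),
\]
a sum of i.i.d. centered sub-exponential variables whose $\psi_1$-norm is of order $\|Z_{il}\|_{\psi_2}\|Z_{jl}\|_{\psi_2}\lesssim\sqrt{\Sigma_{ii}\Sigma_{jj}}\le\Sigma^+_{\max}$. Bernstein's inequality for sums of independent sub-exponential variables (\cite[Corollary 5.17]{vershynin2010introduction}, as already used in Lemma~\ref{devlem1}) gives, for a universal $c>0$,
\[
\mathbb{P}\bigl(|S_{ij}-\Sigma_{ij}|\ge\varepsilon\bigr)\le 2\exp\!\Bigl(-c(N-K)\min\bigl\{\varepsilon^2/(\Sigma^+_{\max})^2,\ \varepsilon/\Sigma^+_{\max}\bigr\}\Bigr).
\]
Choosing $\varepsilon=\Sigma^-_{\max}$ and using $\Sigma^-_{\max}\le\Sigma^+_{\max}$, the minimum is attained by the quadratic term, so the probability is at most $2\exp\bigl(-c(N-K)(\Sigma^-_{\max}/\Sigma^+_{\max})^2\bigr)$. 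On the complement, $|S_{ij}|\le|\Sigma_{ij}|+\Sigma^-_{\max}\le 2\Sigma^-_{\max}$. Taking a union bound over the at most $p^2$ pairs and invoking $N-K\ge C_3(\Sigma^+_{\max}/\Sigma^-_{\max})^2\log(p\vee N)$ gives $S^-_{\max}\le 2\Sigma^-_{\max}$ with probability at least $1-2p^2(p\vee N)^{-cC_3}\ge 1-\tfrac12(p\vee N)^{-1}$ for $C_3$ large. One final union bound of the two events gives $\mathbb{P}(\mathcal{D})\ge 1-(p\vee N)^{-1}$.

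The main point requiring care is the off-diagonal step: correctly bounding the sub-exponential norm of $Z_{il}Z_{jl}-\Sigma_{ij}$ by $O(\Sigma^+_{\max})$ and verifying that the deviation level $\varepsilon=\Sigma^-_{\max}$ falls on the Gaussian (quadratic) branch of Bernstein's inequality — which is precisely where the factor $(\Sigma^+_{\max}/\Sigma^-_{\max})^2$ in the sample-size hypothesis is needed. The diagonal step is routine $\chi^2$ concentration, and collapsing the various absolute constants into one sufficiently large $C_3$ is only bookkeeping.
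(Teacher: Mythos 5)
Your proof is correct and follows essentially the same route as the paper: entrywise concentration of $S_{ij}$ around $\Sigma_{ij}$ (sub-exponential Bernstein for off-diagonal terms, $\chi^2$ concentration for the diagonal) followed by a union bound over the $O(p^2)$ entries, with the sample-size hypothesis absorbing the $\log(p\vee N)$ and the $(\Sigma^+_{\max}/\Sigma^-_{\max})^2$ factors. The paper states this tersely ("akin to the proof of Lemma~\ref{devlem1}\dots after $t$ is adjusted to $c_3'\log(p\vee N)$"); you have merely filled in the same details explicitly, including the correct identification of the Gaussian branch of Bernstein's inequality at the deviation level $\varepsilon=\Sigma^-_{\max}$.
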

\begin{proof}
For any $1\leq j\leq p$, consider $S_{jj}-\Sigma_{jj}=\frac{1}{N-K}\sum_{i=1}^{N-K}\xi_i^2-\Sigma_{jj}$ with $\xi_i\sim\mathcal{N}(0,\Sigma_{jj})$
being independent for $1\leq i\leq N-K$. Akin to the proof of Lemma~\ref{devlem1}, we have with probability at least $1-e^{-t}$ for any $t>0$,
\begin{equation*}
 |S_{jj}-\Sigma_{jj}|\leq C_3'\Sigma_{jj}\Big[\sqrt{\frac{t}{N-K}}\bigvee \frac{t}{N-K}\Big].
\end{equation*}
Similarly we can get for $1\leq i\neq j\leq p$,
\begin{equation*}
 \mathbb{P}\Big(\big|S_{ij}-\Sigma_{ij}\big|\geq C_3'\sqrt{\Sigma_{ii}\Sigma_{jj}}\Big[\sqrt{\frac{t}{N-K}}\bigvee \frac{t}{N-K}\Big]\Big)\leq e^{-t}.
\end{equation*}
The proof is completed after $t$ is adjusted to be $c_3'\log(p\vee N)$ for some constant $c_3'>0$.
\end{proof}

\begin{proof}{\bf of Lemma~\ref{ncon}}
Let $\epsilon_1,\ldots,\epsilon_n$ be $i.i.d.$ Bernoulli random variable with $\mathbb{E}\epsilon_1=\pi$. The Hoeffding inequality states that, 
for any $0<\delta\leq\frac{1}{2}$,
\begin{equation*}
 \mathbb{P}\Big(\sum\limits_{j=1}^n\epsilon_j\geq n(1+\delta)\pi\Big)\leq e^{-\frac{n\delta^2\pi}{4}}
\end{equation*}
and
\begin{equation*}
 \mathbb{P}\Big(\sum\limits_{j=1}^n\epsilon_j\leq n(1-\delta)\pi\Big)\leq e^{-\frac{n\delta^2\pi}{4}}.
\end{equation*}
Lemma~\ref{ncon} follows immediately by applying Hoeffding inequality.
\end{proof}

\begin{proof}{\bf of Proposition~\ref{prop1}}
 By the definition of $(\hat{\beta}_1,\ldots,\hat{\beta}_{K'})$, we have
 \begin{equation*}
 \begin{split}
  \frac{1}{2}\sum\limits_{k=1}^{K'}&\hat{\beta}_k'S\hat{\beta}_k-\sum\limits_{k=1}^{K'}\hat{\delta}_k'\hat{\beta}_k+\sum\limits_{j=1}^p\|\hat{\beta}^j\|\\
 \leq&  \frac{1}{2}\sum\limits_{k=1}^{K'}\beta_k'S\beta_k-\sum\limits_{k=1}^{K'}\hat{\delta}_k'\beta_k+\sum\limits_{j=1}^p\|\beta^j\|.
 \end{split}
 \end{equation*}
 Denote $\Phi_{\beta}=[\beta_1,\beta_2,\ldots,\beta_{K'}]\in\mathbb{R}^{p\times K'}$ by arranging $\beta_j, j=1,\ldots,K'$ as columns. Simple algebras will lead to
 \begin{equation}
 \label{prop1ineq1}
 \begin{split}
  \sum\limits_{j=1}^p\lambda_j&\|\hat{\beta}^j\|\leq -\frac{1}{2}\sum\limits_{k=1}^{K'}(\hat{\beta}_k-\beta_k)'S(\hat{\beta}_k-\beta_k)\\
  +&\sum\limits_{j=1}^p\lambda_j\|\beta^j\|+\sum\limits_{j=1}^p(e_j'S\Phi_{\beta}-(\hat{\delta}^j)')(\hat{\beta}^j-\beta^j)\\
  \leq&\sum\limits_{j=1}^p\lambda_j\|\beta^j\|+\sum\limits_{j=1}^p\|e_j'S\Phi_{\beta}-(\hat{\delta}^j)'\|\|\hat{\beta}^j-\beta^j\|
  \end{split}
 \end{equation}
Then by Lemma~\ref{devlem1}, for any $1\leq k\leq K'$, we have, conditioned on $\mathcal{A}$, with probability at least $1-\frac{2}{\sqrt{\pi}t}e^{-t^2}-e^{-t}$,
\begin{equation*}
 |e_j'S\beta_k-\hat{\delta}_k^j|\leq C_0\sqrt{\frac{\bar{\pi}(\Delta_k\vee 1)\Sigma_{jj}t}{N-K}}.
\end{equation*}
Therefore, with probability at least $\mathbb{P}(\mathcal{A})-\frac{2Kp}{\sqrt{\pi}t}e^{-t^2}-Kpe^{-t}$, we have for all $j=1,\ldots,p$,
\begin{equation*}
 \|e_j'S\Phi_{\beta}-(\hat{\delta}^j)'\|\leq C_0\sqrt{\frac{\bar{\pi}(\Delta\vee K)\Sigma_{jj}t}{N-K}}.
\end{equation*}
The proof is completed when we plug it into (\ref{prop1ineq1}). When $\lambda_j, j=1,\ldots,p$ are chosen as (\ref{prop1lambda}), we have
\begin{equation*}
 2\sum\limits_{j=1}^p\|\hat{\beta}^j\|\leq 2\sum\limits_{j=1}^p\|\beta^j\|+\sum\limits_{j=1}^p\|\hat{\beta}^j-\beta^j\|.
\end{equation*}
By the fact $\beta^j=0$ for any $j\notin T$, we get
\begin{equation*}
 2\sum\limits_{j\in T}\|\hat{\beta}^j\|+2\sum\limits_{j\notin T}\|\hat{\beta}^j\|\leq 2\sum\limits_{j\in T}\|\beta^j\|+\sum\limits_{j\in T}\|\hat{\beta}^j-\beta^j\|+\sum\limits_{j\notin T}\|\hat{\beta}^j\|.
\end{equation*}
Then we get $\sum_{j\notin T}\|\hat{\beta}^j\|\leq 3\sum_{j\in T}\|\hat{\beta}^j-\beta^j\|$.
\end{proof}

\begin{proof}{\bf of Proposition~\ref{prop2}}
From the proof of Proposition~\ref{prop1}, we have on the event $\mathcal{B}_t$,
\begin{equation*}
\begin{split}
 \frac{1}{2}\sum\limits_{k=1}^{K'}(\hat{\beta}_k-\beta_k)'&S(\hat{\beta}_k-\beta_k)+\lambda\sum\limits_{j=1}^p\|\hat{\beta}^j\|\\
 \leq&\lambda\sum\limits_{j=1}^p\|\beta^j\|+\frac{\lambda}{2}\sum\limits_{j=1}^p\|\hat{\beta}^j-\beta^j\|.
 \end{split}
\end{equation*}
Together with Proposition~\ref{prop1}, we get $\sum_{k=1}^{K'}(\hat{\beta}_k-\beta_k)'S(\hat{\beta}_k-\beta_k)\leq 3\lambda\sum_{j=1}^p\|\hat{\beta}^j-\beta^j\|\leq 12\lambda\sum_{j\in T}\|\hat{\beta}^j-\beta^j\|$.
Furthermore, we have
\begin{equation*}
 \begin{split}
  \sum\limits_{k=1}^{K'}(\hat{\beta}_k-\beta_k)'&S(\hat{\beta}_k-\beta_k)=\sum\limits_{k=1}^{K'}\sum\limits_{i,j=1}^p(\hat{\beta}_k^i-\beta_k^i)S_{ij}(\hat{\beta}_k^j-\beta_k^j)\\
  =&\sum\limits_{k=1}^{K'}\Big[\sum\limits_{i}^p(\hat{\beta}_k^i-\beta_k^i)^2S_{ii}+\sum\limits_{i\neq j}^p(\hat{\beta}_k^i-\beta_k^i)S_{ij}(\hat{\beta}_k^j-\beta_k^j)\Big]\\
  \geq&S^+_{\min}\sum\limits_{j=1}^p\|\hat{\beta}^k-\beta^k\|^2+\sum\limits_{i\neq j}^p\sum\limits_{k=1}^{K'}(\hat{\beta}_k^i-\beta_k^i)S_{ij}(\hat{\beta}_k^j-\beta_k^j)\\
    \geq&S^+_{\min}\sum\limits_{j=1}^p\|\hat{\beta}^k-\beta^k\|^2-S^-_{\max}\sum\limits_{i\neq j}^p\|\hat{\beta}^i-\beta^i\|\|\hat{\beta}^j-\beta^j\|.
 \end{split}
\end{equation*}
Therefore, on the event $\mathcal{D}$, we have 
\begin{equation*}
\begin{split}
\sum_{k=1}^{K'}(\hat{\beta}_k-\beta_k)'&S(\hat{\beta}_k-\beta_k)\geq \frac{\Sigma^+_{\min}}{2}\sum_{j=1}^p\|\hat{\beta}^j-\beta^j\|^2-2\Sigma^-_{\max}(\sum\limits_{j=1}^p\|\hat{\beta}^j-\beta^j\|)^2\\
\geq& \frac{\Sigma^+_{\min}}{2}\sum_{j=1}^p\|\hat{\beta}^j-\beta^j\|^2-32\Sigma^-_{\max}(\sum\limits_{j\in T}\|\hat{\beta}^j-\beta^j\|)^2\\
\geq& \frac{\Sigma^+_{\min}}{2}\sum_{j=1}^p\|\hat{\beta}^j-\beta^j\|^2-32s\Sigma^-_{\max}\sum\limits_{j\in T}\|\hat{\beta}^j-\beta^j\|^2
\end{split}
\end{equation*}
Then (\ref{prop2ineq1}) is an immediate result. In the case that $\frac{\Sigma^+_{\min}}{2}-32s\Sigma^-_{\max}\geq \frac{\Sigma^+_{\min}}{4}$, (\ref{prop2ineq1})
indicates that
\begin{equation*}
 \frac{\Sigma^+_{\min}}{4}\sum\limits_{j\in T}\|\hat{\beta}^j-\beta^j\|^2\leq 12\lambda\sum\limits_{j\in T}\|\hat{\beta}^j-\beta^j\|\leq 12\lambda\sqrt{s\sum\limits_{j\in T}\|\hat{\beta}^j-\beta^j\|^2},
\end{equation*}
which leads to (\ref{prop2ineq2}). Similarly we can show (\ref{prop2ineq3}).
\end{proof}

\begin{proof}{\bf of Proposition~\ref{prop3}}
 By applying KKT condition to (\ref{est}), we get for any $1\leq j\leq p$,
 \begin{equation*}
  \|\Phi_{\hat{\beta}}'Se_j-\hat{\delta}^j\|\leq \lambda,
 \end{equation*}
where $\Phi_{\hat{\beta}}$ is defined similarly in the proof of Proposition~\ref{prop1}. Therefore, we get
\begin{equation*}
 \|e_j'S\Phi_{\hat{\beta}-\beta}\|\leq \|\hat{\delta}^j-\delta^j\|+\lambda\leq C_0\sqrt{\frac{\bar{\pi}K\Sigma_{jj}t}{N-K}}+\lambda\leq2\lambda,
\end{equation*}
where the control on $\|\hat{\delta}^j-\delta^j\|$ is based on the event $\mathcal{B}_t$ and Lemma~\ref{devlem1}. By rewriting $e_j'S(\hat{\beta}_k-\beta_k)$ as
$S_{jj}(\hat{\beta}_k^j-\beta_k^j)+\sum_{i\neq j}S_{ji}(\hat{\beta}_k^i-\beta_k^i)=:a_k+b_k$ for $1\leq k\leq K'$, we have
\begin{equation*}
 \|(a_1,a_2,\ldots,a_{K'})\|\leq \|e_j'S\Phi_{\hat{\beta}-\beta}\|+\|(b_1,b_2,\ldots,b_{K'})\|.
\end{equation*}
If $S^+_{\min}\geq \frac{1}{2}\Sigma^+_{\min}$ and $S^-_{\max}\leq 2\Sigma^-_{\min}$, we get
\begin{equation*}
 \begin{split}
  \frac{\Sigma^+_{\min}}{2}\|&\hat{\beta}^j-\beta^j\|\leq 2\lambda+\|(b_1,b_2,\ldots,b_{K'})\|\\
  \leq&2\lambda+S^-_{\max}\sqrt{\sum\limits_{k=1}^{K'}(\sum\limits_{i\neq j}|\hat{\beta}_k^i-\beta_k^i|)^2}\\
  \leq&2\lambda+2\Sigma^-_{\max}\sqrt{\sum\limits_{k=1}^{K'}\sum\limits_{i\neq i'}|\hat{\beta}^i_k-\beta^i_k||\hat{\beta}^{i'}_k-\beta^{i'}_k|}\\
  \leq&2\lambda+2\Sigma^-_{\max}\sqrt{\sum\limits_{i\neq i'}\|\hat{\beta}^i-\beta^i\|\|\hat{\beta}^{i'}-\beta^{i'}\|}\leq2\lambda+2\Sigma^-_{\max}\sum\limits_{j=1}^p\|\hat{\beta}^j-\beta^j\|\\
  \leq&2\lambda+8\Sigma^-_{\max}\sum\limits_{j\in T}\|\hat{\beta}^k-\beta^k\|\leq2\lambda+8\Sigma^-_{\max}\sqrt{s\sum\limits_{j\in T}\|\hat{\beta}^k-\beta^k\|^2}\\
  \leq&2\lambda+\frac{8\sqrt{C_1}\lambda s\Sigma^-_{\max}}{\Sigma^+_{\min}}\leq (2+\frac{\sqrt{C_1}}{16})\lambda,
 \end{split}
\end{equation*}
where the last inequality is due to Proposition~\ref{prop2}. Then we get $\frac{\Sigma^+_{\min}}{2}\|\hat{\beta}^j-\beta^j\|\leq C_2\lambda$.
\end{proof}

\begin{proof}{\bf of Theorem~\ref{mainthm2}}\label{proofintromainthm2}
For any $j\in T$, by $\|\bar{\beta}^j-\beta^j\|=\sum_{j=1}^{K'}(\bar{\beta}_k^j-\beta_k^j)^2$, 
we have $\mathbb{E}\|\bar{\beta}^j-\beta^j\|=\sum_{k=1}^{K'}\mathbb{E}(\bar{\beta}_k^j-\beta_k^j)^2$. For any $1\leq k\leq K'$ such that
$j\in T_k$, we compute $\mathbb{E}(\bar{\beta}_k^j-\beta_k^j)^2$.
Define
\begin{equation*}
 \alpha_j:=\left(e_j'\hat{\Sigma}_{T_k,T_k}^{-1}\hat{\delta}_{k}^{T_k}-e_j'\Sigma_{T_k,T_k}^{-1}\delta_{k}^{T_k}\right)^2=\left((N-K)e_j'A\hat{\delta}_{k}^{T_k}-e_j'\Sigma_{T_k,T_k}^{-1}\delta_{k}^{T_k}\right)^2,
\end{equation*}
where $A\sim\mathcal{W}_{s_k}^{-1}(\Sigma_{T_k,T_k}^{-1},N-K)$, an inverted Wishart distribution.
Therefore, conditioned on $n_1,\ldots,n_K$,
\begin{equation*}
\begin{split}
\mathbb{E}[\alpha_j|n_1,\ldots,n_K]=&\mathbb{E}\Big((N-K)^2(e_j'A\hat{\delta}_k^{T_k})^2\\
-&2(N-K)(e_j'\Sigma_{T_k,T_k}^{-1}\delta_k^{T_k})(e_j'A\hat{\delta}_k^{T_k})+(e_j'\Sigma_{T_k,T_k}^{-1}\delta_k^{T_k})^2\Big|n_1,\ldots,n_K\Big)
\end{split}
\end{equation*}
By property of inverted Wishart matrix, \cite[Theorem 5.2.2]{press2012applied}, $\mathbb{E}A=\frac{\Sigma_{T_k,T_k}^{-1}}{N-K-s_k-1}$. Meanwhile, $\mathbb{E}\hat{\delta}_k^{T_k}=\delta_k^{T_k}$ and $A, \hat{\delta}_k^{T_k}$ are independent.
Then, we get,
\begin{equation*}
 \mathbb{E}[\alpha_j|n_1,\ldots,n_K]=(N-K)^2\mathbb{E}[(e_j'A\hat{\delta}_k^{T_k})^2|n_1,\ldots,n_K]-\frac{N-K+s_k+1}{N-K-s_k-1}(e_j'\Sigma_{T_k,T_k}^{-1}\delta_k^{T_k}).
\end{equation*}
Consider $\mathbb{E}Ae_je_j'A=\mathbb{E}A_{j:}'A_{j:}=\textrm{Cov}(A_{j:}')+\mathbb{E}A_{j:}'\mathbb{E}A_{j:}=\textrm{Cov}(A_{j:}')+\frac{1}{(N-K-s_k-1)^2}\Sigma_{T_k,T_k}^{-1}e_je_j'\Sigma_{T_k,T_k}^{-1}$.
Since $\hat{\delta}_k^{T_k}\sim\mathcal{N}(\delta_k^{T_k},\frac{n_1+n_{k+1}}{n_1n_{k+1}}\Sigma_{T_k,T_k})$, we have,
\begin{equation*}
\begin{split}
 \mathbb{E}[\alpha_j|n_1,\ldots,n_K]=&(N-K)^2\mathbb{E}\Big[\big<\textrm{Cov}(A_{j:}')\hat{\delta}_k^{T_k},\hat{\delta}_k^{T_k}\big>|n_1,n_{k+1}\Big]\\
 +&\frac{(N-K)^2}{(N-K-s_k-1)^2}\mathbb{E}[(e_j'\Sigma_{T_k,T_k}^{-1}\hat{\delta}_k^{T_k})^2|n_1,n_{k+1}]-\frac{N-K+s_k+1}{N-K-s_k-1}(e_j'\Sigma_{T_k,T_k}^{-1}\delta_k^{T_k})^2\\
=&(N-K)^2\mathbb{E}\Big[\big<\textrm{Cov}(A_{j:}')\hat{\delta}_k^{T_k},\hat{\delta}_k^{T_k}\big>|n_1,n_{k+1}\Big]+\frac{(s_k+1)^2}{(N-K-s_k-1)^2}(e_j'\Sigma_{T_k,T_k}^{-1}\delta_k^{T_k})^2\\
+&\frac{(N-K)^2}{(N-K-s_k-1)^2}\frac{n_1+n_{k+1}}{n_1n_{k+1}}\big(\Sigma_{T_k,T_k}^{-1}\big)_{j,j}
\end{split}
\end{equation*}
The first term can be calculated as,
\begin{equation*}
 \begin{split}
  \mathbb{E}\Big[\big<\textrm{Cov}(A_{j:}')\hat{\delta}_k^{T_k},\hat{\delta}_k^{T_k}\big>|n_1,n_{k+1}\Big]=&\big<\textrm{Cov}(A_{j:}')\delta_k^{T_k},\delta_k^{T_k}\big>+\Big<\frac{n_1+n_{k+1}}{n_1n_{k+1}}\Sigma_{T_k,T_k},\textrm{Cov}(A_{j:}')\Big>.
 \end{split}
\end{equation*}
Since, \cite[Theorem 5.2.2]{press2012applied}
$$
\textrm{Cov}(A_{ij},A_{ik})=\frac{(N-K-s_k+1)(\Sigma_{T_k,T_k}^{-1})_{ij}(\Sigma_{T_k,T_k}^{-1})_{ik}+(N-K-s_k-1)(\Sigma_{T_k,T_k}^{-1})_{ii}(\Sigma_{T_k,T_k}^{-1})_{jk}}{(N-K-s_k)(N-K-s_k-1)^2(N-K-s_k-3)},
$$
we get that
\begin{equation*}
 \begin{split}
  \big<\textrm{Cov}(A_{j:}')\delta_k^{T_k},\delta_k^{T_k}\big>=&\frac{(N-K-s_k+1)}{(N-K-s_k)(N-K-s_k-1)^2(N-K-s_k-3)}(e_j'\Sigma_{T_k,T_k}^{-1}\delta_k^{T_k})^2\\
+&\frac{(\Sigma_{T_k,T_k}^{-1})_{j,j}\Delta_k}{(N-K-s_k)(N-K-s_k-1)(N-K-s_k-3)}
 \end{split}
\end{equation*}
and
\begin{equation*}
 \begin{split}
  \big<\frac{n_1+n_{k+1}}{n_1n_{k+1}}\Sigma_{T_k,T_k},\textrm{Cov}(A_{j:}')\big>=&\frac{(n_1+n_{k+1})(N-K-s_k+1)\big<\Sigma_{T_k,T_k}^{-1}e_je_j'\Sigma_{T_k,T_k}^{-1},\Sigma_{T_k,T_k}\big>}{n_1n_{k+1}(N-K-s_k)(N-K-s_k-1)^2(N-K-s_k-3)}\\
+&\frac{(n_1+n_{k+1})s_k(\Sigma_{T_k,T_k}^{-1})_{j,j}}{n_1n_{k+1}(N-K-s_k)(N-K-s_k-1)(N-K-s_k-3)}.
 \end{split}
\end{equation*}
Putting together all the results above, we have that
\begin{equation*}
 \mathbb{E}\alpha_j\approx \frac{s_k^2+N-K}{(N-K)^2}(\beta_k^j)^2+\frac{2(\pi_1+\pi_{k+1})}{(N-K)\pi_1\pi_{k+1}}(\Sigma_{T_k,T_k}^{-1})_{j,j}+\frac{(\Sigma_{T_k,T_k}^{-1})_{j,j}}{N-K}\Delta_k.
\end{equation*}
The remaining of the proof is straightforward.
\end{proof}
\vskip 0.2in 
\bibliographystyle{abbrv}
\bibliography{refer}

\begin{thebibliography}{24}
\providecommand{\natexlab}[1]{#1}
\providecommand{\url}[1]{\texttt{#1}}
\expandafter\ifx\csname urlstyle\endcsname\relax
  \providecommand{\doi}[1]{doi: #1}\else
  \providecommand{\doi}{doi: \begingroup \urlstyle{rm}\Url}\fi

\bibitem[Anderson(2003)]{andersonbook}
T.W. Anderson.
\newblock \emph{An Introduction to Multivariate Statistical Analysis.}
\newblock Wiley Interscience, 2003.

\bibitem[Bickel and Levina(2004)]{bickel2004some}
Peter~J Bickel and Elizaveta Levina.
\newblock Some theory for fisher's linear discriminant function,'naive bayes',
  and some alternatives when there are many more variables than observations.
\newblock \emph{Bernoulli}, pages 989--1010, 2004.

\bibitem[Cai et~al.(2012)Cai, Liu, and Zhou]{cai2012estimating}
T~Tony Cai, Weidong Liu, and Harrison~H Zhou.
\newblock Estimating sparse precision matrix: Optimal rates of convergence and
  adaptive estimation.
\newblock \emph{arXiv preprint arXiv:1212.2882}, 2012.

\bibitem[Cai and Liu(2011)]{cailiu2011}
Tony Cai and Weidong Liu.
\newblock A direct estimation approach to sparse linear discriminant analysis.
\newblock \emph{Journal of the American Statistical Association}, 106\penalty0
  (496), 2011.

\bibitem[Fan et~al.(2012)Fan, Feng, and Tong]{fanfengtong2012}
J~Fan, Y~Feng, and X~Tong.
\newblock A road to classification in high dimensional space.
\newblock \emph{Journal of the Royal Statistical Society. Series B, Statistical
  methodology}, 74\penalty0 (4):\penalty0 745--771, 2012.

\bibitem[Fan and Fan(2008)]{fanfan2008}
Jianqing Fan and Yingying Fan.
\newblock High dimensional classification using features annealed independence
  rules.
\newblock \emph{Annals of statistics}, 36\penalty0 (6):\penalty0 2605, 2008.

\bibitem[Han et~al.(2010)Han, Wu, Jia, Zhuang, and Yu]{han2010multi}
Yahong Han, Fei Wu, Jinzhu Jia, Yueting Zhuang, and Bin Yu.
\newblock Multi-task sparse discriminant analysis (mtsda) with overlapping
  categories.
\newblock In \emph{AAAI}, 2010.

\bibitem[Kolar and Liu(2013)]{kolarliu2013}
Mladen Kolar and Han Liu.
\newblock Feature selection in high-dimensional classification.
\newblock pages 329--337, 2013.

\bibitem[Koltchinskii(2011)]{koltchinskii2011oracle}
Vladimir Koltchinskii.
\newblock \emph{Oracle Inequalities in Empirical Risk Minimization and Sparse
  Recovery Problems: Ecole d’Et{\'e} de Probabilit{\'e}s de Saint-Flour
  XXXVIII-2008}, volume 2033.
\newblock Springer, 2011.

\bibitem[L{\^e}~Cao et~al.(2011)L{\^e}~Cao, Boitard, and Besse]{le2011sparse}
Kim-Anh L{\^e}~Cao, Simon Boitard, and Philippe Besse.
\newblock Sparse pls discriminant analysis: biologically relevant feature
  selection and graphical displays for multiclass problems.
\newblock \emph{BMC bioinformatics}, 12\penalty0 (1):\penalty0 253, 2011.

\bibitem[Liu and Ye(2010)]{liu2010efficient}
Jun Liu and Jieping Ye.
\newblock Efficient l1/lq norm regularization.
\newblock \emph{arXiv preprint arXiv:1009.4766}, 2010.

\bibitem[Lounici et~al.(2009)Lounici, Pontil, Tsybakov, and Van
  De~Geer]{lounici2009taking}
Karim Lounici, Massimiliano Pontil, Alexandre~B Tsybakov, and Sara Van De~Geer.
\newblock Taking advantage of sparsity in multi-task learning.
\newblock \emph{arXiv preprint arXiv:0903.1468}, 2009.

\bibitem[Mai et~al.(2012)Mai, Zou, and Yuan]{maizouyuan2012}
Qing Mai, Hui Zou, and Ming Yuan.
\newblock A direct approach to sparse discriminant analysis in ultra-high
  dimensions.
\newblock \emph{Biometrika}, 2012.

\bibitem[Mai et~al.(2014)Mai, Yang, and Zou]{maiyangzou2014}
Qing Mai, Yi~Yang, and Hui Zou.
\newblock Multiclass sparse discriminant analysis.
\newblock \emph{under review}, 2014.

\bibitem[Merchante et~al.(2012)Merchante, Grandvalet, and
  Govaert]{merchante2012efficient}
Luis Francisco~S{\'a}nchez Merchante, Yves Grandvalet, and Gerrad Govaert.
\newblock An efficient approach to sparse linear discriminant analysis.
\newblock \emph{arXiv preprint arXiv:1206.6472}, 2012.

\bibitem[Muirhead(2009)]{Muirhead}
Robb~J Muirhead.
\newblock \emph{Aspects of multivariate statistical theory}, volume 197.
\newblock John Wiley \& Sons, 2009.

\bibitem[Press(2012)]{press2012applied}
S~James Press.
\newblock \emph{Applied multivariate analysis: using Bayesian and frequentist
  methods of inference}.
\newblock Courier Dover Publications, 2012.

\bibitem[Shao et~al.(2011)Shao, Wang, Deng, Wang, et~al.]{shao2011}
Jun Shao, Yazhen Wang, Xinwei Deng, Sijian Wang, et~al.
\newblock Sparse linear discriminant analysis by thresholding for high
  dimensional data.
\newblock \emph{The Annals of statistics}, 39\penalty0 (2):\penalty0
  1241--1265, 2011.

\bibitem[Vershynin(2010)]{vershynin2010introduction}
Roman Vershynin.
\newblock Introduction to the non-asymptotic analysis of random matrices.
\newblock \emph{arXiv preprint arXiv:1011.3027}, 2010.

\bibitem[Witten and Tibshirani(2011)]{witten2011penalized}
Daniela~M Witten and Robert Tibshirani.
\newblock Penalized classification using fisher's linear discriminant.
\newblock \emph{Journal of the Royal Statistical Society: Series B (Statistical
  Methodology)}, 73\penalty0 (5):\penalty0 753--772, 2011.

\bibitem[Yang et~al.(2006)Yang, Cai, Li, and Lin]{yang2006stable}
Kun Yang, Zhipeng Cai, Jianzhong Li, and Guohui Lin.
\newblock A stable gene selection in microarray data analysis.
\newblock \emph{BMC bioinformatics}, 7\penalty0 (1):\penalty0 228, 2006.

\bibitem[Yuan(2010)]{yuan2010high}
Ming Yuan.
\newblock High dimensional inverse covariance matrix estimation via linear
  programming.
\newblock \emph{The Journal of Machine Learning Research}, 11:\penalty0
  2261--2286, 2010.

\bibitem[Yuan and Lin(2006)]{yuan2006model}
Ming Yuan and Yi~Lin.
\newblock Model selection and estimation in regression with grouped variables.
\newblock \emph{Journal of the Royal Statistical Society: Series B (Statistical
  Methodology)}, 68\penalty0 (1):\penalty0 49--67, 2006.

\bibitem[Zhu et~al.(2014)Zhu, Suk, and Shen]{zhu2014sparse}
Xiaofeng Zhu, Heung-Il Suk, and Dinggang Shen.
\newblock Sparse discriminative feature selection for multi-class alzheimer’s
  disease classification.
\newblock In \emph{Machine Learning in Medical Imaging}, pages 157--164.
  Springer, 2014.

\end{thebibliography}

\end{document}